\newcommand{\Pc}{\mathcal{P}}
\newcommand{\Qc}{\mathcal{Q}}
\newcommand{\Lc}{\mathcal{L}}
\newcommand{\V}{\mathcal{V}}
\newcommand{\R}{\mathbb{R}}
\newcommand{\A}{\mathcal{A}}
\newcommand{\B}{\mathcal{B}}
\newcommand{\Se}{\mathbb{S}}
\newtheorem{problem}{Problem}
\newtheorem{proposition}{Proposition}
\newtheorem{theorem}{Theorem}
\newtheorem{definition}{Definition}
\newtheorem{remark}{Remark}
\newcommand{\first}[1]{\textrm{first}\,#1}
\newcommand{\rest}[1]{\textrm{rest}\,#1}
\newcommand{\add}[1]{\textrm{add}\,#1}
\def\oversl#1{
  \setbox0=\hbox{$#1$}
  \slantmathcorr=\wd0
  \hskip 0.2\slantmathcorr \overline{\hbox to 0.8\wd0{%
  \vphantom{\hbox{$#1$}}}}
  \hskip-\wd0\hbox{$#1$}
}
\def\undersl#1{
  \setbox0=\hbox{$#1$}
  \slantmathcorr=\wd0
  \underline{\hbox to 0.8\wd0{%
  \vphantom{\hbox{$#1$}}}}
  \hskip-0.8\wd0\hbox{$#1$}
}
\begin{document}

\title{A Second-Order Lower Bound for Globally Optimal 2D Registration}

\providecommand{\keywords}[1]{\textbf{\textit{Index terms---}} #1}
\author{Luca Consolini$^1$, Mattia Laurini$^1$, Marco Locatelli$^1$, Dario Lodi Rizzini$^1$}

\date{\small $^1$ Dipartimento di Ingegneria e Architettura, Universit\`a degli Studi di Parma,\\ Parco Area delle Scienze 181/A, 43124 Parma, Italy.\\ luca.consolini@unipr.it, mattia.laurini@unipr.it, marco.locatelli@unipr.it, dario.lodirizzini@unipr.it}

\maketitle

\begin{abstract}
The problem of planar registration consists in finding the
transformation that better aligns two point sets.
In our setting, the search domain is the set of planar rigid transformations
and the objective function is the sum of the distances between each
point of the transformed source set and the destination set.
We consider a Branch and Bound (BnB) method for
finding the globally optimal solution.
The algorithm recursively
splits the search domain into boxes and computes an upper and a lower
bound for the minimum value of the restricted problem.
The main contribution of this work is the introduction of a novel lower bound, the \emph{relaxation bound}, which corresponds to the
solution of a
concave relaxation of the objective function based on the
linearization of the distance. 
In the BnB we also employ the so called \emph{cheap bound}, equal to to the sum of the minimum distances
between each point of source point set, transformed according to current box, and all the candidate points in
the destination point set.
We prove, both theoretically and practically, that the novel relaxation bound dominates the cheap bound over small boxes. More precisely, from the theoretical point of view, we prove that 
the relaxation bound is a second-order approximation of the minimum value, i.e., its distance from the minimum value decreases 
quadratically with respect to the diameter of the box (see Theorem \ref{prop:relaxation_error}), while the cheap bound is a first-order one (see Proposition \ref{prop:cheap_error}). From the practical point of view, we show through different computational experiments that the addition of the relaxation bound considerably enhances the performance of the BnB algorithm, compensating the higher cost of its computation with respect to the cheap bound with a strong
reduction of the number of BnB nodes to be explored.
We stress that the newly proposed relaxation bound has been tested within a specific BnB approach but could be as well employed in other BnB approaches for this problem.
Finally, we remark that a queue-based algorithm is employed for bound computations, which allows for a considerable speed
up.
\end{abstract}

\keywords{point-set registration, global optimization, branch-and-bound}

\maketitle


\section{Introduction}
\label{sec:introduction}

Point set registration is the problem of estimating the rigid transformation that better aligns two or more point sets.
Registration is a primitive for a wide range of applications, including localization and mapping~\cite{NuchterLingemannHertzbergSurmann07, MurArtalMontielTardos15,SerafinGrisetti17}, object reconstruction~\cite{HuberHerbert03, AldomaMartonTombariEtAl12} or shape recognition~\cite{GroganDahyot18}. 
In general, registration enables matching different views of the same object or scene, observed from different viewpoints. 
The research community has proposed different formulations of the problem as well as several registration algorithms. 
In the standard formulation, there are two input point sets, one called source point set and the other called destination, target or reference point set. 
The goal is to find a rigid transform that minimizes the sum of square distances between each transformed source point to its closest destination point. 
Other formulations include, for example, different domains (2D or 3D Euclidean space), input data (polylines, gaussian distributions, etc. instead of points), objective functions, or outlier rejection techniques. 
Most of registration algorithms compute solutions corresponding to local minima of the objective function and are often dependent from the initial estimation of the transformation, which is iteratively refined. 
Recently, global optimal solutions have been proposed according to the Branch and Bound (BnB) paradigm. 

In this paper, we consider a BnB algorithm for finding the globally optimal solution of planar point set registration. 
We consider as input two planar point sets, a source
and a destination set. 
The search domain is the set of planar
rototranslations
and the objective function is the sum of the distances between each
point of the transformed source set and the destination set. For
robustness, the points of the source set with the largest error are
omitted from the sum.
The proposed algorithm recursively splits the search domain into boxes and, when needed, recursively evaluates the lower and upper bounds of the minimum of the objective function for the problem restricted to each box. 
Two different lower bounds are introduced.
The \emph{cheap bound} consists of the sum of the minimum distances
between each point of source point set, after all possible
transformations represented by current box, and all the candidate points in
the destination point set. 
The cheap bound is based on the same idea of the lower bound proposed in~\cite{YangLiCampbellJia16} for the 3D registration problem. 
The \emph{relaxation bound} represents the main contribution of this work. It corresponds to the solution of a
concave relaxation of the objective function based on the
linearization of the distance. In large boxes, the cheap bound is a
better approximation of the function minimum, while, in small boxes,
the relaxation bound is much more accurate. More precisely, it will be theoretically proved that the cheap bound decreases linearly with the diameter of the box, while the relaxation bound decreases quadratically. 
A queue-based algorithm is employed to considerably speed
up the computation of the lower bounds by removing those associations between the source and the destination sets that
cannot correspond to the optimal solution of the problem for any sub-box of the current box.
Outliers are handled by trimming the items with larger residual.

\begin{figure}[t]
    \centering
    \includegraphics[width=0.9\columnwidth]{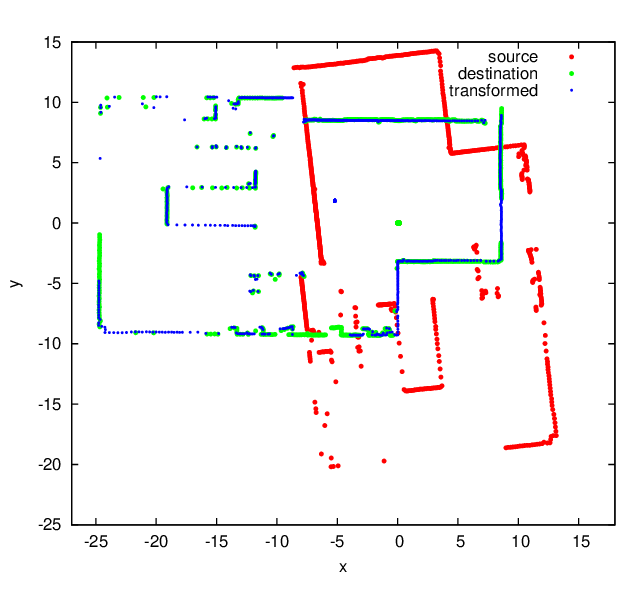}
    \caption{Example of alignment between a source $\mathcal{P}$ (red) and a destination point set $\mathcal{Q}$ (green) with the transformed point set $\mathcal{P}$ (blue).
    }
    \label{fig:scan_aligned_e80_relaxation_on}
\end{figure}

\subsection{Related work}
\label{sec:related}

The literature on registration is extensive and includes different formulations for several (sometimes loosely) related problems. 
The application contexts (computer vision, navigation, etc.) and the formulation of registration (objective function, feature-based or correlation, etc.) result into a variety of works. 
While there are several classification criteria, the categorization into local and global methods is suitable for thoroughly discussing our contribution w.r.t. the literature. 

\subsubsection{Local methods}

Iterative Closest Point (ICP)~\cite{BeslMckay92} is likely the most popular registration algorithm.
The estimation is achieved by iteratively alternating the point association and the estimation of the rigid transform that better aligns the associated points. 
As discussed in section~\ref{sec:introduction}, ICP is sensitive to initial estimation and is prone to local minima. 
Over the years, several variants, like ICP with soft assignment~\cite{GoldRangarajanLuPappuMjolsness98}, ICP with surface matching~\cite{Liu04}, affine ICP~\cite{DuZhengYingLiu10}, have been proposed. 
Generalized Procrustes Analysis (GPA) has also been proposed to simultaneously register multiple point sets in a single optimization step~\cite{ToldoBeinatCrosilla10, AleottiLodiRizziniMonicaCaselli14}. 
In some cases, the association among multiple point sets increases the robustness of estimation. 

Alternative representations to point sets have been proposed to avoid explicit assessment of correspondences. 
Biber and Strasser~\cite{BiberStrasser03} propose the Normal Distributions Transform (NDT) to model the probability of measuring a point as a mixture of normal distributions. 
Instead of establishing hard associations, NDT estimates the transformation by maximizing the probability density function of the point set matched with the mixture distribution. 
The approach has been extended to 3D point clouds~\cite{MagnussonLilienthalDuckett07} and, with ICP, is part of standard registration techniques~\cite{HolzIchimTombariRusuBehnke15}. 
Other registration techniques are based on GMMs computed on point sets~\cite{YangOngFoong15, FanYangAiXiaZhaoGaoWang16, GroganDahyot18}.

Several registration algorithms exploit rigidity of isometric transformation for selecting robust and consistent associations.
The general procedure operates in two steps. 
The first step establishes an initial set of putative associations based on geometric criteria (e.g., correspondence to closest neighbor) or similarity of descriptors. 
The second step filters the outlier associations based on rigidity constraints. 
RANSAC~\cite{FischlerBolles81} and its many variants like MLESAC~\cite{TorrZisserman00} implement this principle through a heuristic random consensus criterion.  
Coherence point drift (CPD) algorithm~\cite{MyronenkoSong10} represents point sets with a GMM and discriminates outliers by forcing GMM centroids to move coherently as a group.
Ma et al.~\cite{MaZhaoTianYuilleTu14} proposed more formally consistent assessment of associations using Vector Field Consensus (VFC). 
This method solves correspondences by interpolating a vector field between the two point sets. 
Consensus approach has also been adopted to the non-rigid registration of shapes represented as GMMs~\cite{MaQiuZhaoMaYuilleTu15,MaZhaoYuille16}. 
The hypothesize-and-verify approach is often successful in the estimation of associations, but it depends on the initial evaluation of putative correspondences. 
Moreover, it does not provide any guarantee of optimality of the solution. 

\subsubsection{Global methods} 

Global registration methods search the rigid transformation between two point sets on the complete domain of solutions. 
Heuristic global registration algorithms are based on particle swarm optimization~\cite{KhanNystrom10}, particle filtering~\cite{SandhuDambrevilleTannenbaum10}, simulated annealing~\cite{LuckLittleHoff00}. 

Another category includes the global registration methods that compare features, descriptors and orientation histograms extracted from the original point clouds. 
Spin Images~\cite{JohnsonHebert99}, FPFH~\cite{RusuBlodowBeets09, HolzIchimTombariRusuBehnke15}, SHOT~\cite{SaltiTombariDiStefano14} and shape context~\cite{BelongieMalikPuzicha02} are descriptors that could be matched according to similarity measure and used for coarse alignment of point clouds. 
Similar histogram-based methods are applied to rotation estimation of 3D polygon meshes through spherical harmonics representation~\cite{BrechbuhlerGerigKubler95, MakadiaPattersonDaniilidis06, MakadiaDaniilidis06, Kazhdan07, AlthloothiMahoorVoyles13}.
All these techniques extend the searching domain and attenuate the problem of local minima, but their computation is prone to failure or achieves extremely coarse estimation.
Moreover, the global optimality of the assessed solution is not guaranteed. 

In planar registration problem, some effective global methods exploiting specific descriptors of orientation have been proposed. 
Hough spectrum registration~\cite{CensiIocchiGrisetti05} assesses orientation through correlation of histogram measuring point collinearity. 
The extension of this method to 3D space~\cite{CensiCarpin09} suffers from observability issues due to symmetry in rotation group. 
Reyes-Lozano et al.~\cite{ReyesMedioniBayro07} propose to estimate rigid motion using geometric algebra representation of poses and tensor voting. 
Angular Radon Spectrum~\cite{LodiRizzini18} estimates the optimal rotation angle that maximizes correlation of collinearity descriptors by performing one-dimensional BnB optimization.

BnB paradigm is the basis for most of global registration methods. 
Breuel~\cite{Breuel03} proposes a BnB registration algorithm for several planar shapes in image domain. 
The shapes are handled by a \emph{matchlist} containing the shapes to be matched. 
The bounds are computed using generic geometric properties, which partially exploit pixel discretization. 
No accurate management of lower bounds is presented.
The BnB algorithm in~\cite{OlssonEnqvistKahl08, OlssonKahlOskarsson09} computes the rigid transformation that matches two point sets, under the hypothesis that the set of correspondences is given, although with outliers.
The lower bound is derived from the convex relaxation of quaternion components. 
The point-based formulation is extended also to the registration of points, lines and planes (in principle, any convex model) through similarity transformation. 
The main limitation lies in the unrealistic assumption that the correspondences between points and convex shapes are known in advance.
Brown et al.~\cite{BrownWindridgeGuillemaut19} propose the first BnB algorithm for 2D-3D registration, which is the problem of aligning the camera viewpoint given an image to a 3D model of the observed scene. 
It adopts solutions similar to~\cite{YangLiCampbellJia16} like a geometric bound and the nested BnB of translation and rotation.

To our knowledge, Go-ICP~\cite{YangLiCampbellJia16} is the most general algorithm for the estimation of the globally optimal registration.
The authors address the formulation for the 3D space domain. 
The main algorithmic contribution of~\cite{YangLiCampbellJia16} is the adoption of the uncertainty radius for measuring the distance between a reference point and a matching point transformed according to a 3D isometry belonging to a box. 
The optimal isometry is estimated by a BnB algorithm, whose lower bound consists of the sum of the minimum distances for each reference point. 
Note that, to achieve efficiency, Go-ICP algorithm adopts the Euclidean Distance Transformation, which approximates the closest-point distances on a uniform grid and limits the accuracy of optimum value estimation.

\subsection{Statement of contribution}

The main contribution of this work is the definition of the  \emph{relaxation bound}, that significantly improves the estimation of the lower bound on small sized boxes with respect to the simpler, geometrically based, \emph{cheap bound}. The superiority of the former bound over the latter is proved theoretically by showing that the relaxation bound is a second-order one (i.e., its error decreases quadratically with the diameter of the box, see Theorem \ref{prop:relaxation_error}), while the cheap bound is first-order (i.e., its error decreases linearly with the diameter of the box, see Proposition \ref{prop:cheap_error}).
 Moreover, from the practical point of view, we developed a BnB algorithm that uses the cheap bound for larger boxes and the maximum between the cheap bound and the relaxation bound for smaller ones
(and employs a rather efficient queue-based algorithm for their computation). 
Computational experiments with this BnB algorithm on simulated and real datasets confirm that the addition of the newly proposed relaxation bound strongly enhances the performance of the BnB algorithm. 
We also stress that the newly proposed relaxation bound has been tested within a specific BnB approach but could be employed in other BnB approaches for the same problem. 

Note that in this work we address the planar formulation of the registration problem, since
this is a relevant problem 
with many immediate applications: image registration~\cite{6313439}, service and telepresence robots~\cite{TriebelEtAl2016, Orlandini16}, industrial AGV localization and navigation~\cite{8103931, 8454892}.
A natural extension is the one to the 3D formulation of the registration problem. 
The main ideas of this work and, in particular, the relaxation bound, can indeed be extended to the 3D case. However, the extension is not
straightforward and, moreover, the increased complexity of the 3D problem, with the additional degrees of freedom in the rototranslations, makes convergence to the globally optimal solution harder (see also the comments in Section \ref{sec:3Dcase}). Computational tricks like the Euclidean Distance Transformation, adopted in the  Go-ICP algorithm, which speeds up computation at the cost of a more limited accuracy in the optimum value estimation, may also apply within the framework of the current work. However, a careful treatment of the 3D case requires many additional considerations which need to be addressed in a separate future work. 


\subsection{Notation}

Given a set $A \subset \R$ with finite cardinality $|A|$, we denote its elements in ascending order as $A_{(1)}, A_{(2)},\ldots, A_{(|A|)}$.
Given $x,y \in \R$, set $x \wedge y= \min\{x, y\}$.
For vectors $x,y \in \R^n$, relation $x \leq y$ is intended componentwise.
A box $B \subset \R^n$ is a set of form $B = \{x \in \R^n\ |\ x^- \leq x \leq x^+\}$ for given $x^-,x^+$, with $x^-\leq x^+$ ( if $x^- \not\leq x^+$, then $B$ is empty); moreover, the diameter of $B$ is given by $\sigma(B) = \left\| x^+ - x^- \right\|_2$, where $\|\cdot\|_2$ denotes the Euclidean norm.

\subsection{Paper organization}
The rest of the paper is organized as follows. 
Sections~\ref{sec:problem_formulation} and \ref{sec:branch_and_bound} illustrate the problem formulation and the outline of the BnB algorithm, respectively.
Section~\ref{sec:lower_bound} defines the two lower bounds used within the BnB algorithm. 
Section~\ref{sec:algorithm} describes the complete algorithm.
Section~\ref{sec_proof} presents some technical proofs.
Section~\ref{sec:results} presents the computational results on simulated point sets and on datasets acquired by laser scanner sensors. 
Finally, Section~\ref{sec:conclusion} gives the concluding remarks.

 
\section{Problem formulation}
\label{sec:problem_formulation}
%
Consider two sets $\Pc=\{P_1,P_2,\ldots,P_n\}$,
$\Qc=\{Q_1,Q_2,\ldots,Q_m\}$, with $\Pc,\Qc \subset \R^2$,
representing the coordinates of points acquired with two different measures.
Our aim is to find a planar rigid transformation so that the transformed
points $\Pc$ are as close as possible
to set $\Qc$. Given $x \in \R^2$, a generic transformation corresponding to a counter-clockwise rotation by $\theta$ followed by a translation of $z$ can be represented by function 
$T_x (z,\theta): \R^2 \times \Se^1 \to \R^2$, defined as
\[
T_x (z,\theta) = R(\theta) x + z\,,
\]
where $R(\theta)$ is the matrix corresponding to a counter-clockwise
rotation by $\theta$, i.e., 
$$R(\theta)=\begin{bmatrix} 
\cos \theta & -\sin \theta \\
\sin \theta & \cos \theta 
\end{bmatrix}.$$
Define $f_{x,Q} (z,\theta): \R^2 \times \Se^1 \to \R$ as
\[
f_{x,Q} (z,\theta)= \|T_x(z,\theta) - Q\|_2^2\,,\]
which represents the squared distance between the rototranslation $T_x(z,\theta)$ of $x$ and
point $Q\in \R^2$, and
$f_x (z,\theta): \R^2 \times \Se^1 \to \R$ as
\[
f_x (z,\theta)= \min_{Q \in \Qc}\ f_{x,Q}(z, \theta)\,,\]
which represents the squared distance between $T_x(z,\theta)$ and set
$\Qc$, that is the minimum of the distances between $T_x(z,\theta)$
and all elements of set $\Qc$.\newline
Now, let $p \in \mathbb{N}$, with $1 \leq p \leq n$, and consider the
following problem.

\begin{problem}\label{prob:glores}
\begin{subequations}
\begin{gather}
\min_{z \in \R^2,\ \theta \in \Se^1,\ S \subset \{1,\ldots,n\}}\ \sum_{i \in S} f_{P_i} (z,\theta) \label{eq:obj_fun}\\
\text{subject to} \quad |S| = p\,. \label{eq:card_cons}
\end{gather}
\end{subequations}
\end{problem}
Problem~\ref{prob:glores} consists in finding the rigid transformation
that minimizes the sum of a subset of the squared distances between
the transformed points $\Pc$ and set $\Qc$. Note that only the smallest $p$
distances are considered in the sum in~(\ref{eq:obj_fun}). 
In this way, possible outliers (that is, points of $\Pc$ that do not
correspond to any point in $\Qc$ or are obtained from erroneous measures)
are excluded from the sum defined in~\eqref{eq:obj_fun}.
The estimator defined in Problem~\ref{prob:glores} has a breakdown
point of $n - p$, that is, the estimation of the rigid transformation is
not compromised if sets $\Pc$ or $\Qc$ contain up to $n - p$ incorrect observations.  This is the same principle that is used in Least Trimmed Squares robust regression to reduce the influence of outliers. 
Problem~\ref{prob:glores} is non-convex and NP-HARD~\cite{MountNetanyahuPiatkoSilvermanWu14}. 
In the following, we will present a BnB method for finding its exact solution.
\section{Outline of BnB method}
\label{sec:branch_and_bound}
Let $B_0=\{(z,\theta) \in \R^2 \times [0,2\pi]\ |\ z^- \leq z \leq z^+\}$.  In this section, we present a BnB algorithm for solving
the following restriction of Problem~\ref{prob:glores} to the initial box $B_0$.
\begin{problem}\label{prob:glores_box}
\begin{subequations}
\begin{gather}
\min_{(z,\theta) \in B_0,\ S \subset \{1,\ldots,n\}}\ \sum_{i \in S} f_{P_i} (z,\theta) \label{eq:obj_fun_box}\\
\text{subject to} \quad |S| = p\,.
\label{eq:card_cons_box}
\end{gather}
\end{subequations}
\end{problem}
Let $\mathcal{B}$ denote the set of boxes included in $B_0 \times
[0,2\pi]$ 
and
let 
$f^*:\mathcal{B} \to \R$ be defined as follows:
\begin{equation*}
\label{eq:fstar}
f^*(B)=\min_{(z,\theta) \in B,\ S\subset \{1,\ldots,n\}\ \mid\ |S| = p}\quad \sum_{i \in S} f_{P_i} (z,\theta).
\end{equation*}
Assume that there exists a function $\phi_L:\mathcal{B} \to \R$, such
that, $(\forall B \in \mathcal{B})$
\begin{equation}
  \label{eqn_def_lb}
 \phi_L(B) \leq f^*(B)\,.
\end{equation}
We will call any $\phi_L$ satisfying~\eqref{eqn_def_lb} a
\emph{lower bound} function. 
Further, let function $r:\mathcal{B} \to \R^2 \times [0,2 \pi]$ be such that $(\forall B \in \mathcal{B})\ r(B) \in B$.  
Function $r$ returns a point within box $B$ (in our
numerical experiments we always return the center of the box).
The optimal solution of Problem~\ref{prob:glores_box} can be found
with the standard BnB Algorithm~\ref{alg_bnb} adapted
from~\cite[p.~18]{BrAndBound}. The algorithm uses a binary
tree whose nodes are associated to a restriction of
Problem~\ref{prob:glores_box} to a box, obtained by
recursively splitting the initial box $B_0$. Input parameter $\epsilon$
represents the maximum relative allowed error on the objective function for
the optimal solution and the output variable $x^*$ is an
approximation of the optimal solution with relative tolerance $\epsilon$.
In Algorithm~\ref{alg_bnb}, function $\delta: \mathcal{B} \to \R$ is
used to define the exploration policy for set $\zeta$. For instance,
in a best first search strategy, the node with the lowest lower bound
is the next to be processed, so that $\delta(\eta)=-\phi_L(\eta)$
(this is also the choice that we made throughout the paper).
\begin{algorithm}
\caption{Main BnB algorithm}
\label{alg_bnb}
\begin{algorithmic}
\State Input: $\Pc$, $\Qc$, $B_0$: data for Problem~\ref{prob:glores_box}
\State $\epsilon$: solution tolerance
\State Output: $x^*$: optimal solution 
\begin{enumerate}
\State Let $\zeta$ be a list of boxes and initialize $\zeta=\{B_0\}$.
\State Set $UB=f(r(B_0))$, and $x^*=r(B_0)$.
\State If $\zeta=\varnothing$, stop. Else set $\delta_{\max} =
  \max \{ \delta(\eta)\ |\ \eta \in \zeta \}$.
\State Select a box $\eta \in \zeta$, with $\delta(\eta)=\delta_{\max}$
  and split it into two equal smaller sub-boxes $\eta_{1}$, $\eta_{2}$ along
  the dimension of maximum length.
\State Delete $\eta$ from $\zeta$ and add $\eta_{1}$ and $\eta_{2}$ to $\zeta$.
\State Update $UB=\min\{UB, f(r(\eta_{1})), f(r(\eta_{2}))\}$.
If $UB= f(r(\eta_j))$ with $j \in \{1,2\}$, set
$x^*= r(\eta_{j})$.
\State For all $\kappa \in \zeta$, if $\phi_L(\kappa) (1+\epsilon) \geq UB$ set $\zeta=\zeta \setminus \kappa$.
\State Return to Step 3.
\end{enumerate}
\end{algorithmic}
\end{algorithm}
Note that the choice of the lower bound function
$\phi_L$ is critical to
efficiency of Algorithm~\ref{alg_bnb}.
The following property on $\phi_L$ guarantees that
Algorithm~\ref{alg_bnb} converges to a solution of
Problem~\ref{prob:glores}, with relative tolerance $\epsilon$.
\begin{equation}
\label{eqn_cond_on_lb}
\lim_{\sigma(B) \to 0}  \left(\phi_L(B)-f^*(B) \right)=0\,.
\end{equation}


\section{Definition of the lower bound function $\phi_L$}
\label{sec:lower_bound}
In the following, we will present two different choices for the lower bound function
$\phi_L$, denoted by $\phi_C$ and $\phi_R$. We call the first one the
$\emph{cheap}$ bound, since its computation time is very small and we
call the second one the $\emph{relaxation}$ bound, since it is based
on a concave relaxation of Problem~\ref{prob:glores_box}. We will show
that the first
bound is well suited for larger boxes, while the second one requires larger computing times
but is much more accurate for smaller boxes.

\subsection{Cheap bound}
Define functions $d_{\min}, d_{\max}: \B \times \R^2  \times \R^2 \to \R$ as
\begin{align*}
d_{\min}(B,P,Q)&=\min_{(z,\theta) \in B}   f_{P,Q} (z,\theta) \\
d_{\max}(B,P,Q)&=\max_{(z,\theta) \in B}   f_{P,Q} (z,\theta).
\end{align*}
If $B=\{(z,\theta)\ |\ z^- \leq z \leq z^+, \theta^- \leq \theta \leq
\theta^+\}$, then $d_{\min}(B,P,Q)$ is the minimum distance
between circle arc $\{R(\theta) P\ |\ \theta \in
[\theta^-,\theta^+]\}$ and rectangle $\{x \in \R^2\ |\ Q-z^+ \leq x
\leq Q-z^-\}$, while $d_{\max}(B,P,Q)$ is the maximum distance
between the same sets. Details about the computation of these distances will be given in Section \ref{sec:dmindmax}.
Functions $d_{\min}$ and $d_{\max}$ can be efficiently computed with the method
presented in Section~\ref{sec_proof}.
Set $d_{\min}(B,P)=\min_{Q \in \Qc} d_{\min}(B,P,Q)$, that is
the minimum distance of $P$ with respect to set $\Qc$, for $(z,\theta) \in B$.
A simple relaxation of Problem~\ref{prob:glores_box} is obtained by
choosing different parameters $(z,\theta)$ for each point of $\Pc$,
which leads to the following proposition (whose proof is not reported
here) and which defines a lower bound closely related to the one presented in~\cite{YangLiCampbellJia16}).
\begin{proposition}
Set 
\begin{equation}
\label{eqn_cheap_b}
\phi_C(B)=\sum_{i=1}^{p} \{d_{\min}(B,P)\ |\ P \in \Pc\}_{(i)}\,,
\end{equation}

then $\phi_C (B)$ is a lower bound for the solution of Problem~\ref{prob:glores_box}.
\end{proposition}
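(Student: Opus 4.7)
The plan is to exhibit $\phi_C(B)$ as the optimal value of a relaxation of Problem~\ref{prob:glores_box} in which each source point is allowed its own independent choice of parameters in $B$, and then argue that any relaxation value bounds the original minimum from below.

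First, I would unfold the definition of $d_{\min}(B,P)$: since $f_P(z,\theta) = \min_{Q \in \Qc} f_{P,Q}(z,\theta)$, swapping the two minimizations yields
\[
d_{\min}(B,P) \;=\; \min_{Q \in \Qc}\, \min_{(z,\theta) \in B}\, f_{P,Q}(z,\theta) \;=\; \min_{(z,\theta) \in B}\, f_{P}(z,\theta).
\]
This identifies $d_{\min}(B,P_i)$ as the smallest value the $i$-th summand of the objective can take when $(z,\theta)$ ranges over $B$, with $Q$ chosen freely for that point.

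Second, I would introduce the relaxed problem
\[
g^*(B) \;=\; \min_{\substack{S \subset \{1,\dots,n\},\ |S| = p\\ (z_i,\theta_i) \in B,\ i \in S}} \ \sum_{i \in S} f_{P_i}(z_i,\theta_i),
\]
where each index in $S$ carries its own parameters $(z_i,\theta_i)$ rather than a common $(z,\theta)$. Since every feasible point of Problem~\ref{prob:glores_box} produces a feasible point of this relaxed problem with the same objective value (set $(z_i,\theta_i) = (z,\theta)$ for all $i$), we immediately get $g^*(B) \leq f^*(B)$.

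Third, I would evaluate $g^*(B)$ explicitly. For any fixed $S$ the inner minimization decouples across indices, giving $\sum_{i \in S} d_{\min}(B,P_i)$. Then optimizing over $S$ with $|S|=p$ selects the $p$ smallest values in the multiset $\{d_{\min}(B,P)\mid P \in \Pc\}$, so
\[
g^*(B) \;=\; \sum_{i=1}^{p} \bigl\{d_{\min}(B,P)\ \big|\ P \in \Pc\bigr\}_{(i)} \;=\; \phi_C(B).
\]
Combining with the previous step gives $\phi_C(B) \leq f^*(B)$, as required.

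The argument is essentially an order-of-minimization swap followed by the classic observation that the sum of the $p$ smallest entries is obtained by the optimal size-$p$ index selection; there is no serious obstacle. The only subtlety worth flagging is making explicit that the relaxation is valid because the original problem embeds into the decoupled one by taking all $(z_i,\theta_i)$ equal, which guarantees the inequality $g^*(B) \leq f^*(B)$ rather than the reverse.
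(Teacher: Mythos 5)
Your proof is correct and takes essentially the same approach the paper intends: the paper omits the proof but explicitly attributes the bound to ``a simple relaxation \ldots obtained by choosing different parameters $(z,\theta)$ for each point of $\Pc$,'' which is precisely your decoupled problem $g^*(B)$. The order-of-minimization swap identifying $d_{\min}(B,P)$ with $\min_{(z,\theta)\in B} f_P(z,\theta)$ and the observation that the optimal size-$p$ selection picks the $p$ smallest values are both valid, so nothing is missing.
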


We remind that  notation $\{d_{\min}(B,P)\ |\ P \in \Pc\}_{(i)}$ denotes
the $i$-th value of set $\{d_{\min}(B,P)\ |\ P \in \Pc\}$, orderd in
ascending order with respect to its elements, so
that~\eqref{eqn_cheap_b} corresponds to the sum of the $p$ smallest
elements of this set.
As the following proposition shows, a brute-force computation of function $\phi_C(B)$ can be
done in time proportional to the cardinality of set $\Pc
\times \Qc$.
\begin{proposition}
$\phi_C(B)$ can be computed in time $O(n m)$.
\end{proposition}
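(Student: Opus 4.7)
The plan is to decompose the computation of $\phi_C(B)$ into the operations defining it and count the cost of each. First, for a single triple $(B,P,Q)$, the value $d_{\min}(B,P,Q)$ reduces (as remarked just before the proposition) to computing the minimum squared distance between a circular arc $\{R(\theta)P\ |\ \theta \in [\theta^-,\theta^+]\}$ and the rectangle $\{x\ |\ Q - z^+ \leq x \leq Q - z^-\}$. I would simply invoke the constructive method from Section~\ref{sec_proof}, under the hypothesis that it produces $d_{\min}(B,P,Q)$ in $O(1)$ time (a finite case analysis on which vertex/edge of the rectangle realizes the minimum, combined with whether the extremal angle lies in $[\theta^-,\theta^+]$). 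I will state and use this as a black box here.

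Next, I would compute $d_{\min}(B,P) = \min_{Q \in \Qc} d_{\min}(B,P,Q)$ for every $P \in \Pc$. For each fixed $P$, this is a minimum over $m$ values each computed in constant time, so $O(m)$ per $P$, giving $O(nm)$ for the whole collection $\{d_{\min}(B,P)\ |\ P \in \Pc\}$.

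It then remains to select the $p$ smallest elements of this $n$-element multiset and sum them. A linear-time selection procedure (e.g.\ quickselect, or a partial sort using a bounded heap of size $p$) extracts and sums the $p$ smallest values in $O(n)$ or $O(n \log p)$ time, which in any case is absorbed by $O(nm)$ since $p \leq n \leq nm$. Adding the three phases yields the claimed $O(nm)$ bound.

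The only real subtlety is the constant-time computation of $d_{\min}(B,P,Q)$; everything else is a counting argument. Since that computation is deferred to Section~\ref{sec_proof}, the proof here reduces to citing it and tallying the work, so I would write a short paragraph combining the three bullets above into a single complexity count, rather than a calculation-heavy argument.
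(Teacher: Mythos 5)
Your proposal is correct and follows essentially the same counting argument as the paper, which simply notes that all $nm$ distances $d_{\min}(B,P,Q)$ must be computed; you merely make explicit the two points the paper leaves implicit, namely that each arc--rectangle distance is a constant-time case analysis and that selecting and summing the $p$ smallest of the $n$ resulting values is absorbed into $O(nm)$. No gap.
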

\begin{proof}
Quantity $\phi_C(B)$ can be obtained by computing all distances
$d_{\min}(B,P,Q)$ for all $P \in \Pc$, which has cardinality $n$, and all
$Q \in \Qc$, which has cardinality $m$. As we will see in Section \ref{sec:dmindmax}, each $d_{\min}$ and $d_{\max}$ computation requires $O(1)$ computing time. 
\end{proof}
In Section~\ref{sec:algorithm}, we will present an algorithm that
allows to reduce considerably the number of computations of
distances $d_{\min}(B,P,Q)$ required for computing $\phi_C$. 
The following proposition, whose proof is reported in Section~\ref{sec_proof},
shows that $\phi_C$ satisfies~\eqref{eqn_cond_on_lb} and that the
error $f^*(B) - \phi_C(B)$ is bounded by a term which is linear with respect to the diameter of
$B$.
\begin{proposition}\label{prop:cheap_error}
There exists a constant $\Gamma_C > 0$, dependent on problem data, such that, for any box $B \in \mathcal{B}$,
\[
|f^*(B) - \phi_C(B)| \leq \Gamma_C \sigma(B).
\]
\end{proposition}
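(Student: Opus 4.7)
The proposition claims a one-sided bound on how much $\phi_C(B)$ can underestimate $f^*(B)$. Since the accompanying proposition already establishes $\phi_C(B) \leq f^*(B)$, my plan is to focus on proving $f^*(B) - \phi_C(B) \leq \Gamma_C\,\sigma(B)$ by exhibiting a specific feasible triple $(z,\theta,S)$ for the problem defining $f^*(B)$ whose objective value is close to $\phi_C(B)$.

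The construction goes as follows. Let $\bar S \subset \{1,\ldots,n\}$, $|\bar S|=p$, be the set of indices of the $p$ smallest values of $d_{\min}(B,P)$, so that $\phi_C(B)=\sum_{i\in\bar S} d_{\min}(B,P_i)$. For each $i \in \bar S$, choose $Q_i \in \Qc$ and $(z_i,\theta_i) \in B$ achieving $d_{\min}(B,P_i) = f_{P_i,Q_i}(z_i,\theta_i)$. Pick any fixed $(z_0,\theta_0) \in B$. Since $(z_0,\theta_0,\bar S)$ is feasible for the problem defining $f^*(B)$,
\[
f^*(B) \leq \sum_{i\in\bar S} f_{P_i}(z_0,\theta_0) \leq \sum_{i\in\bar S} f_{P_i,Q_i}(z_0,\theta_0),
\]
and the task reduces to bounding $f_{P_i,Q_i}(z_0,\theta_0) - f_{P_i,Q_i}(z_i,\theta_i)$ by a multiple of $\sigma(B)$ for every $i\in\bar S$.

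This last step is the only real work, and I would carry it out by a Lipschitz estimate. Writing $f_{P,Q}(z,\theta)=\|T_P(z,\theta)-Q\|^2$ and using the identity $|a^2-b^2|=(a+b)|a-b|$, the difference is bounded by
\[
\bigl(\|T_{P_i}(z_0,\theta_0)-Q_i\| + \|T_{P_i}(z_i,\theta_i)-Q_i\|\bigr)\,\|T_{P_i}(z_0,\theta_0)-T_{P_i}(z_i,\theta_i)\|.
\]
The first factor is uniformly bounded by a constant $M_i$ depending only on the problem data (since $B$ is contained in the initial box $R$ and $\Pc,\Qc$ are finite). For the second factor, since $R(\theta)$ is 1-Lipschitz in $\theta$ (as a standard property of planar rotations),
\[
\|T_{P_i}(z,\theta)-T_{P_i}(z',\theta')\| \leq \|z-z'\| + \|P_i\|\,|\theta-\theta'| \leq \sqrt{1+\|P_i\|^2}\,\sigma(B).
\]
Combining, $f_{P_i,Q_i}(z_0,\theta_0) - d_{\min}(B,P_i) \leq L_i\,\sigma(B)$ with $L_i := 2 M_i\sqrt{1+\|P_i\|^2}$. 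Summing over $i \in \bar S$ gives $f^*(B) - \phi_C(B) \leq \sigma(B)\sum_{i=1}^{n} L_i$, and setting $\Gamma_C := \sum_{i=1}^n L_i$ concludes the proof.

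The only potential obstacle is ensuring that $\Gamma_C$ is genuinely uniform across all $B \in \mathcal{B}$: this hinges on restricting attention to sub-boxes of the initial search region $R$, so that $M_i$ and hence $L_i$ can be taken independent of $B$. All other steps are routine Lipschitz bookkeeping.
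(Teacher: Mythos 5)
Your proof is correct and follows essentially the same route as the paper's: both reduce $f^*(B)-\phi_C(B)$ to a sum, over the $p$ indices realizing $\phi_C(B)$, of per-point gaps between $f_{P_i,Q}$ evaluated at a single point of $B$ and $d_{\min}(B,P_i)$, each controlled by a Lipschitz/gradient bound on $f_{P,Q}$ over the box, yielding $p\cdot O(\sigma(B))$. The only cosmetic difference is that the paper evaluates at the optimizer $(\bar z,\bar\theta)$ of $f^*(B)$ and uses optimality of its index set to pass to the index set defining $\phi_C$, whereas you evaluate at an arbitrary $(z_0,\theta_0)\in B$ and invoke feasibility directly; your explicit constant, and your remark that uniformity requires restricting to sub-boxes of the initial region $R$, are if anything slightly more careful than the paper's appeal to $\max_i |\nabla f_{P_i}|$.
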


\subsection{Relaxation bound}

For $P,Q \in \R^2$, define function $f_{P,Q}: \R^2 \times \R \times \R \to \R$ as

\[
f_{P,Q} (z,c,s)= \left\|\begin{bmatrix} 
c & -s \\
s& c 
\end{bmatrix} P +z - Q \right\|_2^2\,.
\]

Then, with suitable definition of real constants
$c^-,c^+,s^-,s^+$, depending on the current box $B$, value $f^*(B)$ is equivalent to:
\begin{problem}\label{prob:glores_box_ref}
\begin{subequations}
\begin{alignat}{2}
&\min_{z \in \R^2,\ c,s \in \R,\ S \subset \{1,\ldots,n\}}
&\quad& \sum_{i \in S} \min_{Q \in \Qc} f_{P_i,Q} (z,c,s) \label{eq:obj_fun_box_r}\\
&\text{subject to} &      & |S| = p\\
 & & & z^- \leq z \leq z^+ \label{eq:cond_on_z}\\ & & & c^2+s^2=1 \label{eq:con_cir}
\\ & & &c^-\leq c \leq c^+,\enspace s^-\leq s \leq s^+\,. \label{eq:con_cs}
\end{alignat}
\end{subequations}
\end{problem}
We relax Problem~\ref{prob:glores_box_ref} in the following way:
\begin{itemize}
\item We substitute constraints~\eqref{eq:con_cir},~\eqref{eq:con_cs} with $[c,s] \in \A$,
where $\A$ is a convex polygon that contains the circle arc
$\{[\cos \theta, \sin \theta]\ |\ \theta^- \leq \theta \leq
\theta^+\}$. In our experiments, we employed an isosceles trapezoid,
as illustrated in Figure~\ref{fig:arc_bound_trapezoid}.
\item We substitute function $f_{P,Q}$ with an affine underestimator of the same function, namely, a supporting hyperplane of the function at a point within the current box. 
\end{itemize}
\begin{figure}[h]
  \centering
  \includegraphics[width=0.95\columnwidth]{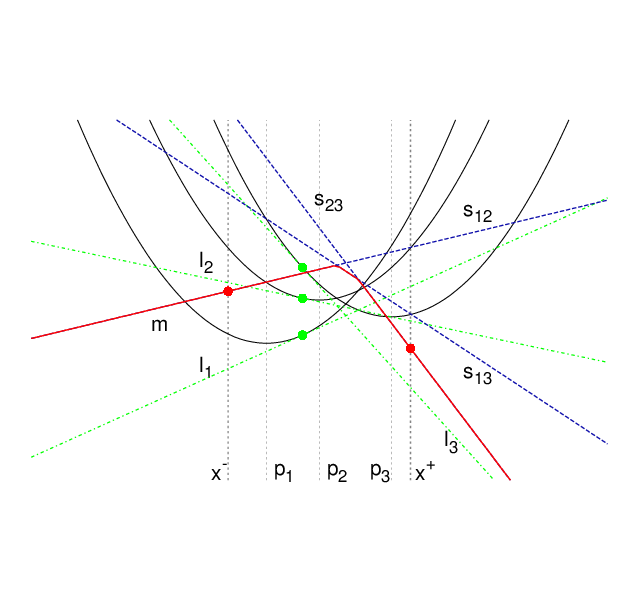}
  \caption{A one dimension example of \emph{relaxation bound}: the squared distance functions from each given point (black); 
     the linearized distance functions (dash-dotted grey lines) at linearization points (green); 
     the concave objective function of the relaxed problem (red).}
  \label{fig:relaxation_bound_schema}
\end{figure}
As we will show in the following, after these two changes, the
modified Problem~\ref{prob:glores_box_ref} consists in the
minimization of a concave function over a polyhedral domain, so that its minimum is
attained at one of the vertices of the domain.
\begin{remark}
To give some intuition on the relaxation bound, we consider the following simple
problem, where $p_1,p_2,p_3 \in \R$ are assigned positions of three
points in a one dimensional domain:
\begin{equation}
  \label{eqn_simple_prob}
  \begin{aligned}
&\min_{x,  S \subset \{1,\ldots,n\}}
& \sum_{i \in S} ((x-p_i)^2 + c_i)
\\ &\text{subject to} &     |S| = 2,\enspace x^- \leq x \leq x^+\,.
\end{aligned}
\end{equation}
Note that the objective function is defined as the minimum of the sum
of convex functions.
Figure~\ref{fig:relaxation_bound_schema} shows the
main idea of the relaxation bound applied to this simple
example. Namely, green dotted
lines $l_1$, $l_2$, $l_3$ represent the supporting hyperplanes of distance functions
$(x-p_i)^2 + c_i$, $i \in \{1,2,3\}$ at the middle point $\frac{x^-+x^+}{2}$.
Blue dotted lines $s_{12}=l_1+l_2$, $s_{13}=l_1+l_3$, $s_{23}=l_2+l_3$ represent
the sums of the couples in set $\{l_1,l_2,l_3\}$, while function
$m=\min \{s_{12},s_{13},s_{23}\}$ is a lower bound of the objective function
in~\eqref{eqn_simple_prob}. Note that function $m$ (represented in
red) is piecewise-linear concave so that
its minimum is achieved at an extremal point of interval $[x^-,x^+]$, that
is, $\min_{x^- \leq x \leq x^+}
m(x)=\min\{m(x^-),m(x^+)\}$. 
\end{remark}
Now, let $\A$ be a convex polygon such that
$\{[\cos \theta, \sin \theta]\ |\ \theta^- \leq \theta \leq \theta^+\} \subset \A$
and let $z_0 \in \R^2$, $c_0,s_0 \in \R$ be assigned parameters. Define
\[
\hat f_{P,Q}(z,c,s) = \nabla f_{P,Q} (z_0,c_0,s_0) [z-z_0,c-c_0,s-s_0]^T
+ f_{P,Q}(z_0,c_0,s_0)\,,
\]
and
\[
\hat f_{P}(z,c,s)= \min_{Q \in \Qc} \hat f_{P,Q} (z,c,s)\,.
\]
Let us consider the following problem.
\begin{problem}\label{prob:non_so_cheap}
\begin{subequations}
\begin{alignat}{2}
&\min_{z \in \R^2,\ c,s \in \R,\ S \subset \{1,\ldots,n\}}        & \sum_{i \in S} \hat f_{P_i} (z,c,s)   \label{eq:obj_fun_chb}\\
&\text{subject to} &      |S| = p \label{eq:card_cons_chb} \\
& & [c,s] \in \A,\enspace z^- \leq z \leq z^+\,, \label{eq:rel_condbis}
\end{alignat}
\end{subequations}
\end{problem}
We prove the following proposition, which leads to the definition of the relaxation bound.
\begin{proposition}
\label{prop:relbound}
Problem (\ref{prob:non_so_cheap})
is a relaxation of Problem~\ref{prob:glores_box_ref}.
\end{proposition}
\begin{proof}
For any $P \in \Pc$, $Q \in \Qc$, $z \in \R^2$, $c,s \in \R$,
$\hat f_{P,Q} (z,c,s) \leq f_{P,Q} (z,c,s)$, since function $f_{P,Q}$
is convex (being the squared norm of a function linear in $z,c,s$) and
$\hat f_{P,Q}$ is its supporting hyperplane at point $ (z_0,c_0,s_0)$.
Hence, for any  $z \in \R^2$, $c,s \in \R$, $S \subset \{1,\ldots,n\}$,
objective function~\eqref{eq:obj_fun_chb} is not larger than ~\eqref{eq:obj_fun_box_r}.
Further,
conditions~\eqref{eq:cond_on_z}~\eqref{eq:con_cir},~\eqref{eq:con_cs}
imply~\eqref{eq:rel_condbis} by the definition of $\A$ so that the feasible region of Problem~\ref{prob:glores_box_ref} is a subset of
the feasible region of Problem~\ref{prob:non_so_cheap}.
\end{proof}
Note that, in our experiments, we have always chosen $(z_0,c_0,s_0)$ equal
to the center of the current box. As a consequence of Proposition \ref{prop:relbound}, the optimal value
of Problem (\ref{prob:non_so_cheap})
is a lower bound for the optimal value of Problem~\ref{prob:glores_box_ref}.
Thus, next step is to prove that the optimal value of Problem  (\ref{prob:non_so_cheap})
can be efficiently computed.
This is stated in the following proposition.
\begin{proposition}
Let $V(B) \subset \R^4$ be the finite set of vertices of polytope
$[z^-,z^+] \times \A$ (we put in evidence the dependency of this set on the current box $B$). Then,
the solution of Problem~\ref{prob:non_so_cheap}
is given by
\[
\min_{x \in V(B)} \sum_{i=1}^{p} \{\min_{Q \in \Qc} \hat f_{P,Q} (x)\ |\ P \in \Pc\}_{(i)} \,.
\]
\end{proposition}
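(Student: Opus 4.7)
The plan is to rewrite the objective of Problem~\ref{prob:non_so_cheap} so as to expose its concavity in $(z,c,s)$, then invoke the classical fact that a concave function attains its minimum on a polytope at a vertex.

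First, I would fix $(z,c,s)$ and carry out the inner minimization over $S$. Since $|S|=p$, the best choice is to take the indices of the $p$ smallest values of $\hat f_{P_i}(z,c,s)$, so that
\[
\min_{|S|=p}\ \sum_{i \in S} \hat f_{P_i}(z,c,s) \;=\; \sum_{i=1}^{p} \bigl\{\hat f_{P}(z,c,s)\ \big|\ P \in \Pc \bigr\}_{(i)}.
\]
Denote this common value by $g(z,c,s)$. Using the identity $\sum_{i=1}^{p} a_{(i)} = \min_{S:\,|S|=p} \sum_{i \in S} a_i$, we can equivalently write
\[
g(z,c,s) \;=\; \min_{S \subset \{1,\dots,n\},\ |S|=p}\ \sum_{i \in S}\ \min_{Q \in \Qc}\ \hat f_{P_i,Q}(z,c,s).
\]

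Next I would establish concavity of $g$ on $\R^2 \times \R \times \R$. By its definition, each function $\hat f_{P,Q}$ is \emph{affine} in $(z,c,s)$, being the first-order Taylor expansion of $f_{P,Q}$ at $(z_0,c_0,s_0)$. For fixed $S$, interchanging sum and minimum gives
\[
\sum_{i \in S}\ \min_{Q_i \in \Qc}\ \hat f_{P_i,Q_i}(z,c,s)
\;=\; \min_{(Q_i)_{i \in S} \in \Qc^{|S|}}\ \sum_{i \in S}\ \hat f_{P_i,Q_i}(z,c,s),
\]
which is a minimum of finitely many affine functions, hence concave. Taking a further minimum over the finite collection of admissible index sets $S$ preserves concavity, so $g$ is a concave function of $(z,c,s)$.

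Finally, Problem~\ref{prob:non_so_cheap} reduces to minimizing $g$ over the feasible region $[z^-,z^+] \times \A$, which is a polytope (the Cartesian product of a box and a convex polygon) with finite vertex set $X$. Since a concave function attains its minimum over a non-empty compact polytope at one of its vertices, we conclude
\[
\min_{(z,c,s) \in [z^-,z^+]\times \A}\ g(z,c,s) \;=\; \min_{x \in X}\ g(x) \;=\; \min_{x \in X}\ \sum_{i=1}^{p} \bigl\{\min_{Q \in \Qc}\ \hat f_{P,Q}(x)\ \big|\ P \in \Pc\bigr\}_{(i)},
\]
which is the claimed formula. The only subtle point is the concavity argument; once one recognizes that the sum of the $p$ smallest of a collection of functions is a minimum of sums, concavity follows from standard calculus of concave functions and the rest is immediate from the vertex principle for concave minimization over polytopes.
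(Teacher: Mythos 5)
Your proof is correct and follows essentially the same route as the paper: rewrite the objective as a single concave function of $(z,c,s)$ (concavity via minima of affine functions and sums/minima of concave functions), then invoke the fact that a concave function attains its minimum over a polytope at a vertex. The only cosmetic difference is that you justify concavity of $\sum_{i \in S}\min_{Q}\hat f_{P_i,Q}$ by interchanging the sum and the minimum, while the paper uses the equivalent ``sum of concave functions is concave'' step.
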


\begin{proof}
Define function $\tilde f: \R^2 \times \R \times \R \to
\R$ as
\begin{equation}
\label{eq:obj_subpr}
\tilde f(z,c,s)= \min_{S \subset \{1,\ldots,n\},|S|=p}         \sum_{i
  \in S} \min_{Q \in \Qc} \hat f_{P_i,Q} (z,c,s) \,.
\end{equation}
Note that function $\tilde f$ is concave. Indeed:
\begin{itemize}
\item each function $\hat{f}_{P_i,Q}$ is linear;
\item $\min_{Q \in \Qc} \hat f_{P_i,Q}$ is concave since it is the minimum of a finite number of linear functions;
\item for each set $S$, we have that $\sum_{i\in S} \min_{Q \in \Qc} \hat f_{P_i,Q}$ is concave since it is a sum of concave functions;
\item finally, $\tilde{f}$ is concave since it is the minimum of a finite set of concave functions (obtained by all possible subsets
$S\subseteq \{1,\ldots,n\}$ with cardinality $p$).
\end{itemize} 
Function $\tilde f$ is easily computed by calculating all values of
$\min_{Q \in \Qc} \hat f_{P_i, Q} (z,c,s)$, for $i \in \{1,\ldots,n\}$ and then summing the $p$
smallest values, namely
$\tilde f(z,c,s)=\sum_{i=1}^{p} \{\min_{Q \in \Qc} \hat f_{P,Q} (z,c,s)\ |\ P \in \Pc\}_{(i)}$.
Further,
Problem~\ref{prob:non_so_cheap} can be reformulated as
\begin{problem}\label{prob:non_so_cheap_ref}
\begin{subequations}
\begin{alignat}{2}
&\min_{z \in \R^2,\ c,s \in \R}        & \tilde f (z,c,s)  \\
&\text{subject to} &   [c,s] \in \A,\enspace z^- \leq z \leq z^+\,. \label{eq:rel_cond}
\end{alignat}
\end{subequations}  
\end{problem}
Note that the feasible region is a polytope, while, as already commented, the objective function is concave.
Since the minimum of a concave function over a polytope is attained at one of
its vertices (for instance, see Property~12, page~58
of~\cite{horst2013handbook}), the solution of
Problem~\ref{prob:non_so_cheap_ref} is obtained by evaluating $\tilde
f$ at each vertex of $[z^-,z^+] \times \A$ and taking the minimum. 
\end{proof}
In the following, for a box $B$ we will set
\[
\phi_R(B)=\min_{x \in V(B)} \sum_{i=1}^{p} \{\min_{Q \in \Qc} \hat f_{P,Q} (x)\ |\ P \in \Pc\}_{(i)}\,.
\]
As shown in the following proposition, the
computation of $\phi_R$ has the same time-complexity of $\phi_C$.
\begin{proposition}
Function $\phi_R$ can be computed with time-complexity $O(n m |V(B)|)$.
\end{proposition}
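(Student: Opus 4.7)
The plan is to establish the complexity bound by counting operations vertex by vertex, since $\phi_R(B)$ is defined as a minimum over the finite vertex set $X$. For each vertex $x \in X$, I need to compute $\tilde{f}(x) = \sum_{i=1}^{p}\{\min_{Q\in\Qc}\hat{f}_{P,Q}(x)\mid P\in\Pc\}_{(i)}$, so the total cost is $|X|$ times the per-vertex cost.

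For a fixed $x \in X$, I would first argue that a single evaluation $\hat{f}_{P_i,Q_j}(x)$ costs $O(1)$ arithmetic operations, since $\hat{f}_{P,Q}$ is by construction an affine function in $(z,c,s)$ (it is the first-order Taylor expansion of $f_{P,Q}$ at $(z_0,c_0,s_0)$, with coefficients that depend only on $P$, $Q$, and the fixed linearization point). Computing all values $\hat{f}_{P_i,Q_j}(x)$ for $i=1,\ldots,n$ and $j=1,\ldots,m$ therefore costs $O(nm)$. Taking the minimum over $Q \in \Qc$ for each fixed $P_i$ is another $O(m)$ scan per $i$, i.e.\ $O(nm)$ in total, yielding the list of $n$ values $a_i = \min_{Q\in\Qc}\hat{f}_{P_i,Q}(x)$.

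Next I would handle the outer sum of the $p$ smallest $a_i$. Using a linear-time selection algorithm (quickselect) one finds the $p$-th order statistic, partitions accordingly, and sums the $p$ smallest in $O(n)$ time; alternatively, a partial sort gives $O(n\log p)$. Either way this step is dominated by the $O(nm)$ already spent to build the $a_i$'s. Hence the per-vertex cost is $O(nm)$.

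Multiplying by the $|X|$ vertices of the polytope $[z^-,z^+] \times \A$ gives the announced $O(nm|X|)$ bound. There is no real obstacle here; the only subtlety worth stating carefully is that $\hat{f}_{P,Q}$ is affine and thus evaluable in constant time per pair (this is what rules out an extra factor from evaluating the original quadratic), so the complexity matches that of $\phi_C$ up to the factor $|X|$ from enumerating vertices.
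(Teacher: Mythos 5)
Your argument is correct and follows essentially the same route as the paper's proof: evaluate the affine functions $\hat f_{P,Q}$ at each of the $|X|$ vertices for all $nm$ pairs, with the order-statistic step dominated by the $O(nm)$ evaluation cost. The paper states this in one line; your version merely spells out the constant-time evaluation of each linearization and the selection of the $p$ smallest values.
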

\begin{proof}
Function $\phi_R$ is evaluated by computing $\hat f_{P,Q} (z,c,s)$ for
all $P \in \Pc$, $Q \in \Qc$ and $(z,c,s) \in V(B)$.
\end{proof}
Since one can fix the cardinality of $V(B)$, the time-complexity of the
computation of $\phi_R$ with respect to the $m$ and
$n$ is the same as the cheap bound $\phi_C$. However, in
practice, computing $\phi_R$ takes more time than $\phi_C$
because of the computation of the gradient $\hat f$
and the necessity of iterating on all vertices of $V(B)$. Anyway, for
small boxes, $\phi_R$ is a much more accurate lower bound than
$\phi_C$. This will be theoretically proved in the following theorem.
Set $\A$ can be conveniently defined as an isosceles trapezoid as
displayed in Figure~\ref{fig:arc_bound_trapezoid}. With this choice,
the following theorem, whose proof is given in Section~\ref{sec_proof}, shows that quantity $|f^*(B) -
\phi_R(B)|$ is bounded by a term proportional to $\sigma(B)^2$, i.e., the relaxation bound is a second-order one, while we proved in Proposition \ref{prop:cheap_error} that the cheap bound is just a first-order one.
\begin{theorem}\label{prop:relaxation_error}
Under the assumption that convex polygon $\A$ is an isosceles
trapezoid (as in Figure~\ref{fig:arc_bound_trapezoid}), there exists a constant $\Gamma_R > 0$, dependent on problem data, such that, for any box $B \in \mathcal{B}$, with $\theta^+ - \theta^- < \frac{\pi}{2}$,
\begin{gather*}
| f^*(B) - \phi_R(B) | \leq \Gamma_R \sigma(B)^2.
\end{gather*}
\end{theorem}
\begin{figure}[th]
  \centering
  \includegraphics[width=0.95\columnwidth]{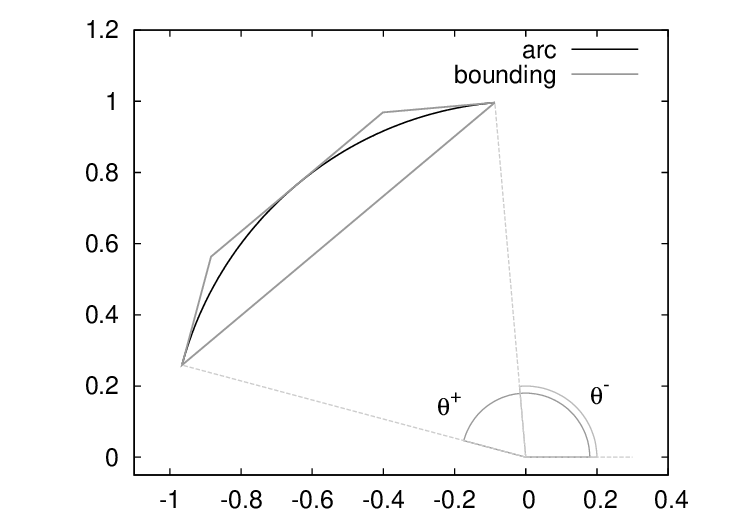}
  \caption{An example of a convex isosceles trapezoid bounding an arc
    used in the computation of the relaxation bound.}
  \label{fig:arc_bound_trapezoid}
\end{figure}
\section{Algorithm description}
\label{sec:algorithm}
A brute-force evaluation of bound $\phi_C$ in~\eqref{eqn_cheap_b} requires the computation of
all $nm$  distances $d_{\min}(B,P,Q)$ for $P \in \Pc$ and $Q \in \Qc$, for all
boxes $B$ encountered during the BnB algorithm. The
computation of $\phi_C$ can be made more efficient with the following
procedure. The main idea is that, if $B' \subset B$ is a box obtained by splitting
$B$, the evaluation of $\phi_C(B')$ is simplified by taking into
account the information already gained in evaluating $\phi_C(B)$.
We first need to introduce some operations on ordered lists.
Let $\Lc = (\R \times \{1,\ldots,m\})^*$ (where $*$ is the Kleene star~\cite{HopcroftMotwaniUllman06}) be the set consisting of all ordered pairs of form $(d,i)$. 
For $L \in \Lc$, we denote by $\first(L)$ the first pair of $L$
and by $\rest(L)$ the list obtained from $L$ by removing its first
element. If $L$ is empty, $\first(L)$ returns $\varnothing_c=(-1,0)$, that denotes the empty pair.
Further, for any pair $c=(d,i)$, we denote by $\add(L,c)$, the list
obtained by adding $c$ to the list, in such a way that the pairs of
$\add(L,c)$ are ordered in ascending order with respect to the first
element of the pair.
We associate to each box $B \in \mathcal{B}$ and each $P \in \Pc$ a list $L_{B,P} \in
\Lc$. The elements $(d,i)$ of $L_{B,P}$ will have the
following meaning: $i$ will be the
index of a point in set $\Qc$ and $d$ will be a lower bound for $d_{\min}(B,P,Q_i)$.
Further, we associate to each $B$ and $P$ a term $U_{B,P} \in \R$,
that will represent an upper bound for $\min_{Q \in \Qc} d_{\max}(B,P,Q)$.
At the beginning, Algorithm~\ref{alg_init} is applied
to each $P \in \Pc$, where $B_0$ is the box corresponding to the complete domain.
\begin{algorithm}
\caption{Initialization}
\label{alg_init}
\begin{algorithmic}[1]
  \Function {init} {$B_0,P,\Qc$}
\State $U_{B_0,P}=\infty$
\For {$j=1$ to $m$}
\State $\ell=d_{\min}(B_0,P,Q_j)$
\State $\Call{add}{L_{B_0,P}, (\ell,j)}$
\State $U_{B_0,P}=d_{\max}(B_0,P,Q_j) \wedge U_{B_0,P}$
\EndFor
\State \Return $L_{B_0,P}$, $U_{B_0,P}$
\EndFunction
\end{algorithmic}
\end{algorithm}
Namely, for each $P \in \Pc$, $L_{B_0,P}$ is initialized by adding all
pairs $\{d_{\min}(B_0,P,Q_i)\ |\ i \in \{1,\ldots, m\}\}$, corresponding to all minimum
distances between points in $\Pc$, transformed with all possible
transformations corresponding to parameters in
$B_0$, and the elements of $\Qc$. Further $U_{B_0,P}$ is initialized to
$\min \{d_{\max}(B_0,P,Q_i)\ |\ i \in \{1,\ldots, m\}\}$, that is the minimum of the upper
bounds of the distances between the transformed point $P$ and point
set $\Qc$.
\begin{figure}[th]
    \centering
    \includegraphics[width=0.68\columnwidth]{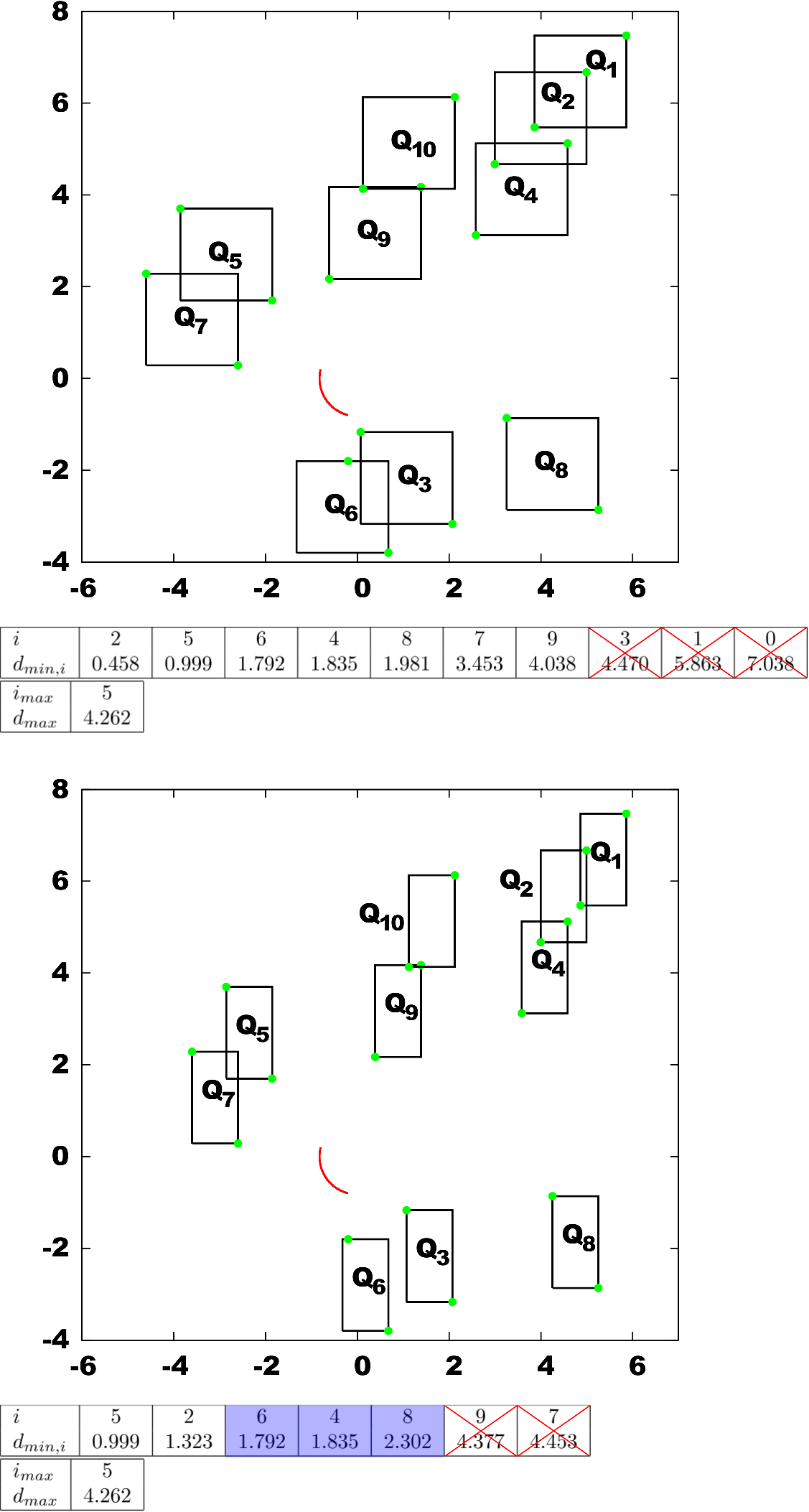}
    \caption{An example illustrating the operations performed in
      Algorithm~\ref{alg_lb}, for a source point $P$ and a set of
      destination points $Q_i$, with $i \in \{1,\ldots,10\}$. The top figure considers
box $B = [0,2] \times [-2,0] \times [\pi/2, \pi]$ and displays 
arc $\{R(\theta) P\ |\ \theta \in
[\pi/2,\pi]\}$ in red and rectangles
$\{x \in \R^2\ |\ Q_i-[2,0]^T \leq x
\leq Q_i-[0,-2]^T\}$, with $i \in \{1,\ldots,n\}$.
The bottom subfigure represents the same geometrical objects, associated to the child box
$B' = [0,1] \times [-2,0] \times [\pi/2, \pi]$.
      The nearest and farthest points from the arc of each rectangle are colored in green. 
      The table below the top figure reports quantities $L_{B,P},
      U_{B,P}$ and the table below the bottom figure reports
      $L_{B,P}, L_{B',P}$.
   }
    \label{fig:cheap_bound_queue_example}
  \end{figure}
Then, each time a box $B'$ is obtained by splitting a box $B$,
Algorithm~\ref{alg_lb} is applied to $B'$, for each $P \in \Pc$.
\begin{algorithm}
\caption{Lower bound computation}
\label{alg_lb}
\begin{algorithmic}[1]
\Function {update}{$B'$, $B$, $L_{B,P}$, $U_{B,P}$, $\Qc$}
\State $(d_{\text{old}},i)=\first(L_{B,P})$ \label{alg_lb_begin}
\State $L_{B,P}=\rest(L_{B,P})$
\State $m=d_{\min}(B',P,Q_i)$ \label{alg_comp_m} 
\State $U_{B',P}= d_{\max}(B',P,Q_i)$ \label{alg_lb_init_end} \label{alg_first_upd_u}
\State $\add{L_{B',P},(m,i)}$
\State $(d_{\text{old}},i)=\first(L_{B,P})$  \label{alg_lb_first_while}
\State $L_{B,P}=\rest(L_{B,P})$
\While {$d_{\text{old}} \leq m$ and $(d_{\text{old}},i) \neq \varnothing_c$} \label{alg:cond_first_while}
\State $d=d_{\min}(B',P,Q_i)$
\State $m= m \wedge d$
\State $U_{B',P}= U_{B',P} \wedge d_{\max}(B',P,Q_i)$ \label{alg_sec_upd_u}
\State add $(d,i)$ to $L_{B',P}$ \label{alg_ins_while1}
\State $(d_{\text{old}},i)=\first(L_{B,P})$
\State $L_{B,P}=\rest(L_{B,P})$
\EndWhile \label{alg_lb_first_while_end}
\While {$U_{B',P} > d_{\text{old}}$ and $(d_{\text{old}},i) \neq \varnothing_c$} \label{alg_lb_sec_while}
\State add $(d_{\text{old}},i)$ to $L_{B',P}$ \label{alg_ins_while2}
\State $(d_{\text{old}},i)=\first(L_{B,P})$
\State $L_{B,P}=\rest(L_{B,P})$
\EndWhile \label{alg_lb_sec_while_end}
\State \Return {$L_{B',P}$, $U_{B',P}$}
\EndFunction
\end{algorithmic}
\end{algorithm}
In lines~\ref{alg_lb_begin}--\ref{alg_lb_init_end} the first element $(d_{\text{old}},i)$
of $L_{B,P}$ is removed from $L_{B,P}$ and it is set
$m=d_{\min}(B',P,Q_i)$, $U_{B',P}=d_{\max}(B',P,Q_i)$. Variables $m$ and $U_{B',P}$
represent the minimum values of $d_{\min}(B',P,Q_i)$ and
$d_{\max}(B',P,Q_i)$, respectively. Note that, at each iteration, $m$ and $U_{B',P}$ will denote the minimum 
values of $d_{\min}(B',P,Q)$ and
$d_{\max}(B',P,Q)$ 
among all points $Q$ already processed, respectively. 
Then, pair
$(m,i)$ is added to list $L_{B',P}$ associated to the new box $B'$, while
the next pair $(d_{\text{old}},i)=\first(L_{B,P})$ is removed from $L_{B,P}$.
In lines~\ref{alg:cond_first_while}--\ref{alg_lb_first_while_end},
a while cycle is iterated until
 either the list is empty, in which case all points have been processed,
or $d_{\text{old}} \leq m$, in which case  the computation
of $d_{\min}(B',P,Q_i)$ is not necessary since it would not alter the
value of $m$. Note that this is also true for the remaining pairs
$(d,i)$ of
$L_{B,P}$, since the list is ordered in ascending order of $d$.
If the stopping conditions are not fulfilled, 
quantities $d=d_{\min}(B',P,Q_i)$, $d_{\max}(B',P,Q_i)$ are recomputed
(updating $m$ if required) and the updated
couples $(d,i)$ are added to $L_{B',P}$.
Finally, in lines~\ref{alg_lb_sec_while}--~\ref{alg_lb_sec_while_end}
all remaining pairs $(d_{\text{old}},i)$ of $L_{B,P}$ are processed. All pairs
such that $d_{\text{old}}<U_{B',P}$ are added to $L_{B',P}$. Note that pairs
$(d_{\text{old}},i)$ that are not added to $L_{B',P}$ correspond to points
$Q_i$ that have a distance to $P$ that is greater than the current
value of $U_{B',P}$. Hence, they are not added to $L_{B',P}$ since
they cannot be the element of $\Qc$ with minimum distance to
$P$ for all transformations belonging to $B'$ (or any of its
subsets). In this way, list $L_{B',P}$ can be smaller than $L_{B,P}$,
speeding up all subsequent iterations of the algorithm in boxes
obtained by recursively dividing $B'$.
The following definition states the properties that quantities $L_{B,P}$ and
$U_{B,P}$ must satisfy for the correct application of Algorithm~\ref{alg_lb}.
\begin{definition}
$L_{B,P}$ and $U_{B,P}$ are consistent if the following
conditions are satisfied:
\begin{enumerate}
\item For each $(d,i) \in L_{B,P}$, $d_{\min}(B,P,Q_i) \geq d$;
\item $U_{B,P} \geq \min_{i \in \{1,\ldots,m\}}
d_{\max}(B,P,Q_i)$;
\item For each $i \in \{1,\ldots,n\}$ such that there does not exist any $d \in
\R$ such that $(d,i) \in L_{B,P}$, $d_{\min}(B,P,Q_i)\geq d^*$, where $(d^*,i^*)=\first(L_{B,P})$;
\item $d^*=\min_{Q \in \Qc} d_{\min}(B,P,Q)$,  where $(d^*,j)=\first(L_{B,P})$.
\end{enumerate} 
\end{definition}
\begin{remark}
In order to better understand the queue update performed by
Algorithm~\ref{alg_lb}, we present an example consisting of a
source point $P$ and a set of destination points $Q_i$,
$i \in \{1,\ldots,10\}$, illustrated in
Figure~\ref{fig:cheap_bound_queue_example}.
Define box $B = [0,2] \times [-2,0] \times [\pi/2, \pi]$.
The top subfigure displays arc $\{R(\theta) P\ |\ \theta \in
[\pi/2,\pi]\}$, associated to point $P$ and rectangles
$\{x \in \R^2\ |\ Q_i-[2,0]^T \leq x
\leq Q_i - [0,-2]^T\}$ associated to points $Q_i$, $i \in \{1,\ldots,n\}$.
The table below the top subfigure of
Figure~\ref{fig:cheap_bound_queue_example} reports the elements of the
sorted queue $L_{B,P}$ and the scalar $U_{B,P}$, for a consistent
couple $L_{B,P}$, $U_{B,P}$. Namely, queue
$L_{B,P}$ contains couples $(d_{\min,i},i)=(d_{\min}(B,P,Q_i),i)$, sorted in ascending
order with respect to the the first element, while $U_{B,P}= \min_{i} d_{\max}(B,P,Q_i)$. 
The couples $(d_{\min,i},i)$ such that $d_{\min, i} > d_{\max}$ are not
present in queue $L_{B,P}$ and are represented as crossed boxes. 
The bottom subfigure in Figure~\ref{fig:cheap_bound_queue_example}
represents the same geometrical objects, associated to the child box
$B' = [0,1] \times [-2,0] \times [\pi/2, \pi]$ obtained after
splitting $B$ along the first coordinate.
The table below the bottom subfigure of
Figure~\ref{fig:cheap_bound_queue_example} shows queue $L_{B',P}$ and
scalar $U_{B',P}$, obtained as the output of Algorithm~\ref{alg_lb}.
In particular, the white boxes represent the couples added after the
execution of the while cycle in
lines~\ref{alg:cond_first_while}--\ref{alg_lb_first_while_end}, in
which the value of the minimum distance terms $d_{\min,i}$ are updated.
The blue boxes represent the couples added in the second while cycle
in lines~\ref{alg_lb_sec_while}--\ref{alg_lb_sec_while_end}, in
which the values of $d_{\min,i}$ are not updated, while
the boxed cells represent the discarded couples among those
originally present in $L_{B,P}$.
\end{remark}
Note that $L_{B_0,P}$ and $U_{B_0,P}$, as defined by the initialization
procedure, are consistent.
As a consequence, the following proposition proves, by induction, that
consistency holds at each iteration of the main BnB algorithm.
\begin{proposition}
In Algorithm~\ref{alg_lb}, if input variables $L_{B,P}$ and $U_{B,P}$ are
consistent, then output variables $L_{B',P}$ and $U_{B',P}$ are also
consistent.
\end{proposition}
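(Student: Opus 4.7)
The plan is to verify conditions (1)--(4) of Definition~1 for the output pair $L_{B',P}, U_{B',P}$, given that $L_{B,P}, U_{B,P}$ are consistent. I will use throughout two monotonicity facts following from $B' \subseteq B$: for every $P$ and $Q$,
\[
d_{\min}(B',P,Q) \geq d_{\min}(B,P,Q), \qquad d_{\max}(B',P,Q) \leq d_{\max}(B,P,Q).
\]

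Conditions (1) and (2) are routine. Every pair inserted into $L_{B',P}$ is added either with an exact key $d = d_{\min}(B',P,Q_i)$ recomputed at $B'$ (the initial insertion and the insertions of the first while loop), for which condition (1) is trivial, or with an inherited key $d = d_{\text{old}}$ in the second while loop, in which case condition (1) for $L_{B,P}$ combined with monotonicity gives $d_{\min}(B',P,Q_i) \geq d_{\min}(B,P,Q_i) \geq d_{\text{old}}$. Condition (2) follows because $U_{B',P}$ is a running minimum of $d_{\max}(B',P,Q_i)$ over a non-empty subset of indices, so it trivially upper-bounds $\min_i d_{\max}(B',P,Q_i)$.

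The heart of the proof is condition (4): the key $d^*_{B'}$ of the first pair of $L_{B',P}$ must equal $\mu := \min_{Q \in \Qc} d_{\min}(B',P,Q)$. Let $m$ denote the value of the algorithm variable $m$ at the exit of the first while loop. I would first argue that $m$ is the smallest key inserted into $L_{B',P}$: the first while loop inserts pairs whose keys are the recomputed values $d_{\min}(B',P,Q_i)$, whose minimum is $m$, while the second while loop only inserts pairs with keys $d_{\text{old}} > m$ by the first while's exit condition and the sorted invariant of $L_{B,P}$. I would then verify $d_{\min}(B',P,Q_i) \geq m$ for every $i \in \{1,\ldots,|\Qc|\}$ by three cases: (i) $i$ was processed in the first while loop, so its value appears in the minimum defining $m$; (ii) $i$ remained in $L_{B,P}$ after the first while exited, in which case $d_{\text{old}}(i) > m$ and condition (1) of $L_{B,P}$ plus monotonicity yield $d_{\min}(B',P,Q_i) \geq d_{\text{old}}(i) > m$; (iii) $i$ is not present in $L_{B,P}$, handled below. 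Combined with the trivial inequality $m \geq \mu$, this gives $m = \mu$. Condition (3) then follows immediately, since any index absent from $L_{B',P}$ satisfies $d_{\min}(B',P,Q_i) \geq \mu = d^*_{B'}$ by the very definition of $\mu$.

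The delicate step is case (iii), which requires tracking past discards across successive invocations of Algorithm~\ref{alg_lb}. One must strengthen the induction hypothesis to the following: every index $i$ absent from $L_{B,P}$ was discarded under the rule $d_{\text{old}} \geq U_{B_{\text{anc}},P}$ at some ancestor box $B_{\text{anc}} \supseteq B$. Monotonicity then gives $d_{\min}(B',P,Q_i) \geq U_{B_{\text{anc}},P}$. The key geometric observation is the inequality $d_{\max}(B_{\text{anc}},P,Q_j) \geq d_{\min}(B',P,Q_j)$ for every $j$, valid because every point of $B'$ also lies in $B_{\text{anc}}$; combined with condition (2) at $B_{\text{anc}}$ and a minimum over $j$, this yields $U_{B_{\text{anc}},P} \geq \min_j d_{\min}(B',P,Q_j) = \mu$. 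Hence $d_{\min}(B',P,Q_i) \geq \mu \geq m$ in case (iii), closing the induction and confirming that no index potentially realizing the minimum in a descendant box was ever prematurely discarded.
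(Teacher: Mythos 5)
Your proof is correct, and for conditions (1) and (2) it coincides with the paper's argument. For conditions (3) and (4), however, you take a genuinely different and more complete route. The paper proves both by contradiction and, at the decisive step of (4), invokes condition (3) of the \emph{input} pair to conclude that any index $i$ with $d_{\min}(B',P,Q_i)<d^*$ must still be present in $L_{B,P}$; but condition (3) as written only yields $d_{\min}(B,P,Q_i)\geq d^*_B$ for absent indices, and since $d^*_{B'}\geq d^*_B$ this inequality points the wrong way --- it cannot by itself exclude that a point discarded at an ancestor box becomes the minimizer over the smaller box $B'$. Your case (iii) supplies exactly the missing ingredient: a discarded index satisfies $d_{\min}(B',P,Q_i)\geq U_{B_{\mathrm{anc}},P}$, and $U_{B_{\mathrm{anc}},P}$ equals (or dominates) some $d_{\max}(B_{\mathrm{anc}},P,Q_j)\geq d_{\min}(B',P,Q_j)\geq \min_{Q\in\Qc}d_{\min}(B',P,Q)$, so the discard is harmless for every descendant box. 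This is precisely the formal counterpart of the informal remark after Algorithm~\ref{alg_lb} that dropped points ``cannot be the element of $\Qc$ with minimum distance to $P$ for all transformations belonging to $B'$ (or any of its subsets),'' and it requires, as you note, strengthening the induction hypothesis beyond the stated Definition~1. What your route buys is a closed induction and a cleaner logical order (establish $m=\mu$ directly by exhausting the three cases, then obtain (3) as a corollary of (4)); what the paper's route buys is brevity, at the cost of leaving the treatment of previously discarded indices implicit.
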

\begin{proof}
$L_{B',P}$ satisfies condition 1). In fact, if pair $(d,i) \in
L_{B',P}$ is inserted in line~\ref{alg_ins_while1}, then, actually
$d=d_{\min}(B',P,Q_i)$. If it is inserted in
line~\ref{alg_ins_while2}, $d$ is not recomputed, so that
$d \leq d_{\min}(B,P,Q_i) \leq d_{\min}(B',P,Q_i)$, where the first
inequality holds by assumption and the second since $B' \subset B$.
Moreover, $U_{B',P}$ is updated for the last time either in
line~\ref{alg_first_upd_u} or line~\ref{alg_sec_upd_u}. In both cases
there exists $i \in \{1,\ldots,n\}$ such that
$U_{B',P} = d_{\max}(B',P,Q_i)$, so that property 2) holds.
To prove 3), assume by contradiction that $d_{\min}(B',P,Q_i) <
d^*$. This implies that
$d_{\min}(B',P,Q_i) \leq d^*$, so that $(d,i)$ must have been inserted
in $L_{B',P}$ in line~\ref{alg_ins_while1} and, being $d < d^*$, $(d^*,i^*) \neq
\first(L_{B',P})$ (being list $L_{B',P}$ ordered) which is a contradiction.
Finally, to prove 4), assume by contradiction that there exists $i \in
\{1,\ldots,n\}$ such that $d^*>d_{\min}(B',P,Q_i)$. By point 3) this
implies
that $L_{B,P}$ contains a couple of form $(d,i)$.
Anyway, no couple of the form $(d,i)$ is added in
line~\ref{alg_ins_while1}, because this would imply that $d^* \leq d$.
This implies that such couple is added in line~\ref{alg_ins_while2},
so that $(d,i) \in L_{B,P}$, that is $(d,i)$ belongs also to the queue
of the parent box $B$.  Note that quantity $m$ at
line~\ref{alg:cond_first_while} is such that $m \geq d^*$. This
implies that the exit condition for the while-loop
in~\ref{alg:cond_first_while} is satisfied for $d_\text{old}=d$,
contradicting the previously stated fact that no couple of form
$(d,i)$ is added to list $L_{B,P}$ in that cycle.
\end{proof}
If $\{L_{B,P}\ |\ P \in \Pc\}$ are consistent bounds for $B$, then $\phi_C(B)$ can be
computed by summing the first elements of lists $\{L_{B,P}\ |\ P \in
\Pc\}$ as stated in Algorithm~\ref{alg_phic_comp}.
\begin{algorithm}
\caption{Computation of $\phi_C$ from $L_{B,P}$}
\label{alg_phic_comp}
\begin{algorithmic}[1]
\Function {ComputeLBC}{$\Pc$, $\{L_{B,P}\ |\ P \in \Pc\}$,$p$}
\State $b=0$
\For {$P \in \Pc$}
\State $(b_P,i)= \first(L_{B,P})$
\EndFor
\State \Return {$\sum_{i=1}^p \{b_P\ |\ P \in \Pc\}_{(i)}$}
\EndFunction
\end{algorithmic}
\end{algorithm}
\begin{algorithm}
\caption{Computation of $\phi_R$ for $B$}
\label{alg_phir_comp}
\begin{algorithmic}[1]
\Function {ComputeLBR}{$\Pc$, $B$, $\{L_{B,P}\ |\ P \in \Pc\}$, $p$}
\State Write $B$ as $\{(z,\theta)\ |\ z^-\leq z \leq z^+, \theta^- \leq
\theta \leq \theta^+\}$
\State Let $\A$ be the vertices of a convex polygon containing arc 
$\{[\cos \theta,  \sin \theta]\ |\ \theta^- \leq \theta \leq \theta^+\}$
\State Set $\V=\{z^-,z^+\} \times \A$
\State $\ell=\infty$ 
\For {$v \in \V$}
\State Set $X=\varnothing$
\For {$P \in \Pc$}
\State $m=\infty$
\For {$(d,i) \in L_{B,P}$}
\State $m=m \wedge \hat f(P,Q_i)$
\EndFor
\State Set $X=X \cup \{m\}$
\EndFor
\State $\ell=\ell \wedge \sum_{i=1}^p \{X\}_{(i)}$
\EndFor
\State \Return {$\ell$}
\EndFunction
\end{algorithmic}
\end{algorithm}
Algorithm \ref{alg_phir_comp} details the computation of the relaxation bound $\phi_R$, while
the overall implementation of the BnB method is reported in
Algorithm~\ref{alg_bnb_imp}. Here, $\delta$ is a threshold value for
using lower bound function $\phi_R$. Namely, $\phi_C$ is always
computed, since its computational cost is very low, while $\phi_R$ is
computed only for sufficiently small boxes, where it is more precise
than $\phi_C$, as a consequence of Theorem~\ref{prop:relaxation_error}.
\begin{algorithm}
\caption{Actual implementation of BnB algorithm}
\label{alg_bnb_imp}
\begin{algorithmic}
\State Input: $\Pc$, $\Qc$, $R$: data for Problem~\ref{prob:glores_box}
\State $\epsilon$: solution tolerance
\State $\delta$: tolerance for lower bound selection
\State Output: $x^*$: optimal solution 
\begin{enumerate}
\State Let $\zeta$ be a list of boxes and initialize $\zeta=\{R\}$.
\State For any $P \in Pc$, set $(L_{R,P},U_{R,P})=\Call{init}{R,P,\Qc}$
\State Set $UB=f(r(R))$, and $x^*=r(R)$.
\State If $\zeta=\varnothing$, stop. Else set $\delta_{\max} =
  \max \{ \delta(\eta)\ |\ \eta \in \zeta \}$.
\State Select a box $\eta \in \zeta$, with $\delta(\eta)=\delta_{\max}$
  and split it into two equal smaller sub-boxes $\eta_{1}$, $\eta_{2}$ along
  the dimension of maximum length.
\State Delete $\eta$ from $\zeta$ and add $\eta_{1}$ and $\eta_{2}$ to $\zeta$.
\State Update $UB=\min\{UB, f(r(\eta_{1})), f(r(\eta_{2}))\}$.
If $UB= f(r(\eta_j))$ with $j \in \{1,2\}$, set
$x^*= r(\eta_{j})$.
\For{$j=1,2}$
\State for any $P \in \Pc$, set
$(L_{\eta_j,P},U_{\eta_j,P})=\Call{update}{\eta_j,\eta,P,\Qc,L_{\eta,P},U_{\eta,P}}$
\State $\phi_C=\Call{computeLBC}{\Pc,\{L_{\eta_j,P}\ |\ P \in \Pc\}}$
\If{ the largest dimension of $\eta$ is lower than $\delta$}
\State $\phi_R=\Call{computeLBR}{\Pc,\eta_j,\{L_{\eta_j,P}\ |\ P \in \Pc\},p}$
\Else
\State $\phi_R=\infty$
\EndIf
\State $\phi_L(\eta_j)=\phi_R \wedge \phi_C$
\EndFor
\State For all $\kappa \in \zeta$, if $\phi_L(\kappa) (1+\epsilon) \geq UB$ set $\zeta=\zeta \setminus \kappa$.
\State Return to step 3.
\end{enumerate}
\end{algorithmic}
\end{algorithm}
\subsection{Comparison with the method
  presented in~\cite{YangLiCampbellJia16}}
\label{subs_comp}
At this point, we can compare in more detail our method with the one
proposed in~\cite{YangLiCampbellJia16}, which is probably the
reference in literature that bears more similarities with our approach,
even if~\cite{YangLiCampbellJia16} considers the 3D registration
problem, while we focus on the planar case.
In fact, both methods are based on a BnB search on the rigid
transformation parameters domain. Moreover, the cheap bound that we
defined in this paper is similar to the lower bound estimate
defined in~\cite{YangLiCampbellJia16}.
Moreover, with respect to~\cite{YangLiCampbellJia16}, the present
paper introduces the following features:
\begin{itemize}
\item We introduce the relaxation bound, which is much more accurate
  than the cheap bound in small boxes (see
  Proposition~\ref{prop:cheap_error}
  and Theorem~\ref{prop:relaxation_error}). This second bound allows to close
  the gap between the upper and lower bound in the BnB algorithm
  more efficiently than the method presented
  in~\cite{YangLiCampbellJia16}. 
  With the numerical experiment presented in Section~\ref{sec:results}, we will show that the relaxation bound
  is of fundamental importance for the efficiency of the algorithm.
\item We introduce the queue-based algorithm, presented in
  Section~\ref{sec:algorithm}, that reduces the number of
  point-to-point distances to be computed during the BnB recursion. In
  Section~\ref{sec:results}, we will show that also this improvement largely
  affects the overall efficiency of the algorithm.
  \end{itemize}
Moreover, note that in the method presented
in~\cite{YangLiCampbellJia16}, the computation of the objective
function is based
on a precomputed distance transformation on a finite grid. This
approach considerably speeds up the computation of the objective
function but also introduces a significant error in its evaluation
that affects the overall accuracy of the algorithm.
In our method, we do not use a precomputed distance transformation,
this reduces the speed in the evaluation of the objective function,
but does not affect the accuracy of the final result.
\subsection{Extension to 3D case}
\label{sec:3Dcase}
The algorithm can be extended to the 3D case. Note that a 3
dimensional rigid transformation can be described by 6 parameters (3
rotations and 3 translation variables), while 3 parameters are
sufficient to represent a rigid planar transformation. 
The larger dimension requires the development of a very
efficient lower bound estimator, in order to avoid evaluating an
excessively high number of nodes in the BnB algorithm.
One possibility for such a bound consists in the extension of
the relaxation bound presented in this paper to the 3D case, using
quaternions to parameterize rotations. We are currently working in
this direction. Anyway, the 3D case is significantly different from
the planar one and is outside the scope of this paper.
\section{Proofs}
\label{sec_proof}
\subsection{Computation of $d_{\min}$ and $d_{\max}$}
\label{sec:dmindmax}
\begin{figure}[th]
  \centering
  \includegraphics[width=0.95\columnwidth]{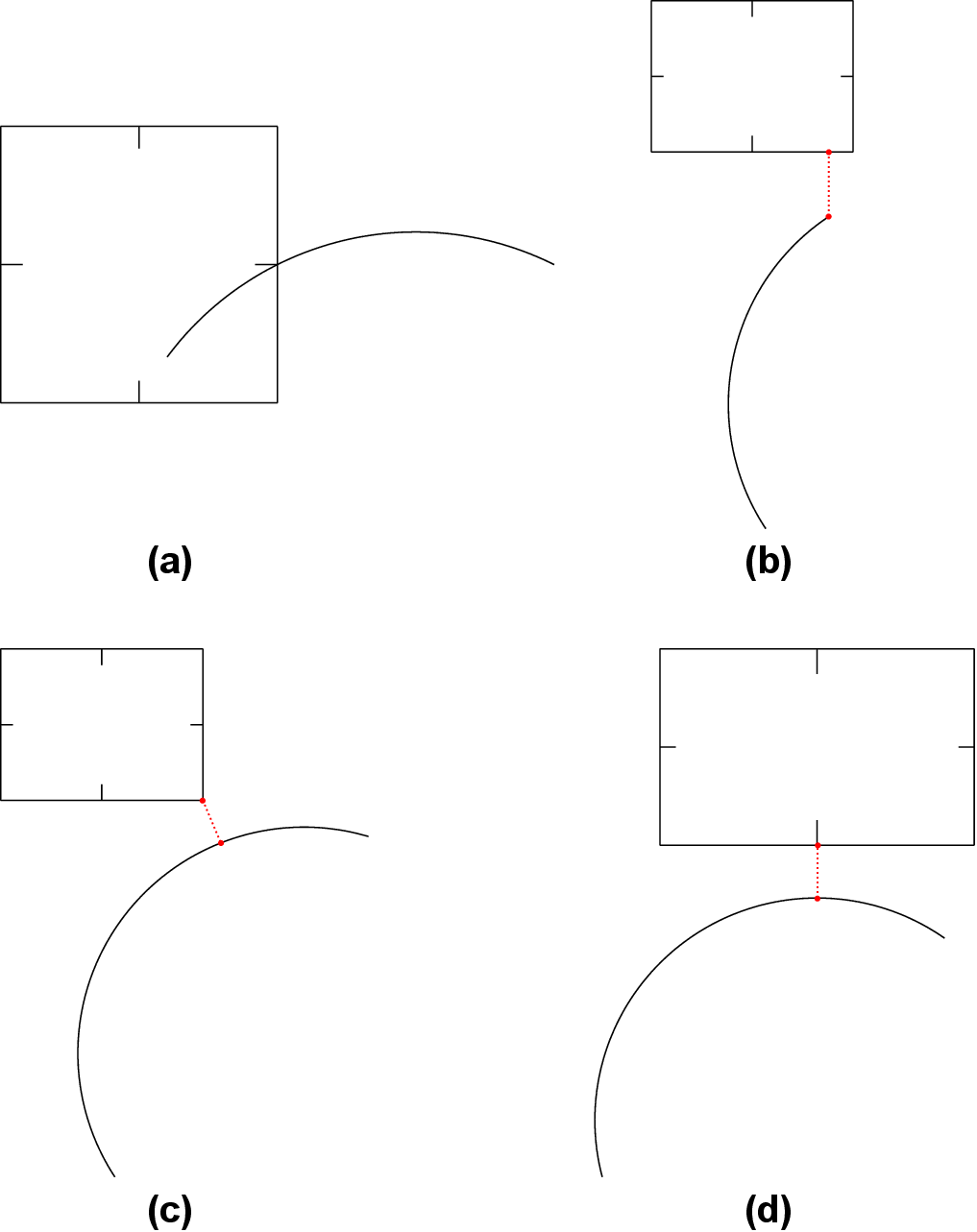}
  \caption{Four configurations of rectangle and arc relevant for the computation of minimum distance between the two geometric shapes:
    (a) zero distance with intersection; (b) one arc endpoint is the closest arc point to the rectangle; (c) one rectangle vertex is closest to the arc; 
    (d) the closest points are both internal points of a rectangle edge and the arc, respectively. 
  }
  \label{fig:arc_rectangle_distance_min}
\end{figure}
In order to compute the minimum distance between a circle arc and a rectangle we adopt the following strategy:
first, we check whether the arc-rectangle distance is zero.
We do that by checking if at least one vertex of the rectangle lies outside the circle to which the arc belongs and if at least one edge of the rectangle has a non-empty intersection with the arc (see Figure~\ref{fig:arc_rectangle_distance_min} (a)).
Indeed, if all rectangle vertices lay inside the circle, then the rectangle is contained in the circle and there are no points in common between the arc and the rectangle.
If the distance is greater than zero, we have three possibilities: the
minimum distance can be attained at an extreme point of the arc (see
Figure~\ref{fig:arc_rectangle_distance_min} (b)), at a vertex of the
rectangle and an internal point of the arc (see
Figure~\ref{fig:arc_rectangle_distance_min} (c)) or at an internal
point of an edge of the rectangle and an internal point of the arc
(see Figure~\ref{fig:arc_rectangle_distance_min} (d)). Function
$d_{\max}$ can be computed by taking the maximum of all the distances
between the vertices of the rectangle and the arc.
\subsection{Proof of Proposition~\ref{prop:cheap_error}}
Given a box $B \in \mathcal{B}$, let $\bar z, \bar \theta$ and $\bar
S$ be such that $f^*(B) = \sum_{i \in \bar S} f_{P_i}(\bar z, \bar
\theta)$.
Moreover, let $\tilde S$ be such that  $\sum_{i \in \tilde S}
    d_{\min} (B, P_i) =\sum_{i =1}^p \{d_{\min} (B, P)\ |\ P \in \mathcal{P}\}_{(i)}$.
Then, we can write the error as follows
\begin{gather}
\left| f^*(B) - \phi_C(B) \right| 
= \left| \sum_{j \in \bar S} f_{P_j}(\bar z, \bar \theta) - \sum_{i \in \tilde S}   d_{\min} (B, P_i) \right|\,.\label{eq:cheap_error1}
\end{gather}
By definition of $f^*(B)$, it follows that
\[
\sum_{j \in \bar S} f_{P_j}(\bar z, \bar \theta) \leq
\sum_{j \in \tilde S} f_{P_j}(\bar z, \bar \theta)\,,
\]
so that
\begin{gather}
\left| \sum_{i \in \bar S} f_{P_i}(\bar z, \bar \theta) - \sum_{i
    \in \tilde S}   d_{\min} (B, P_i) \right| \leq 
\left| \sum_{i \in \tilde S}\left( f_{P_i}(\bar z, \bar \theta) -
    d_{\min} (B, P_i) \right)\right|\,. \label{eq:cheap_error2}
\end{gather}
Now, we can estimate from above each difference in~\eqref{eq:cheap_error2} with $\Delta \sigma(B)$, where $\Delta = \max_{i \in \{1, \ldots, p\}} |\nabla f_{P_i}|$ and $\nabla f_{P_i}$ denotes the gradient of $f_{P_i}$, from which we obtain
\[
\left| f^*(B) - \phi_C(B) \right| \leq \sum_{i = 1}^p \Delta \sigma(B) = p \Delta \sigma(B).
\]
\subsection{Proof of Theorem~\ref{prop:relaxation_error}}
Let $B$ be a box in $\mathcal{B}$, if $\theta^+ - \theta^- < \frac{\pi}{2}$, then the maximum distance between the circle arc $\{[\cos\theta, \sin\theta]\ |\ \theta^- \leq \theta \leq \theta^+\}$ and convex isosceles trapezoid $\mathcal{A}$ is given by $\rho\left(1 - \cos\left(\frac{\theta^+ - \theta^-}{2}\right)\right)$, where $\rho$ is the radius of the circle arc.
Indeed, under this hypothesis, the pair of points for which the distance arc-trapezoid is maximized is given by the midpoint of the segment joining $(\cos\theta^-, \sin\theta^-)$ and $(\cos\theta^+, \sin\theta^+)$, that is $\left( \frac{\cos\theta^+ + \cos\theta^-}{2}, \frac{\sin\theta^+ + \sin\theta^-}{2} \right)$, and the mid point of the circle arc $\left( \cos\left( \frac{\theta^+ + \theta^-}{2} \right), \sin\left( \frac{\theta^+ + \theta^-}{2} \right) \right)$.
We can estimate the error by decomposing it into two components as follows
\begin{gather}
\left| f^*(B) - \phi_R(B) \right|
\leq \left| f^*(B) - \phi_\ell(B) \right| + \left| \phi_\ell(B) - \phi_R(B) \right|, \label{eq:error_r}
\end{gather}
where the first term in~\eqref{eq:error_r} represents the linearization error given by $\phi_\ell$, in which Problem~\ref{prob:glores_box_ref} has been relaxed by substituting $f_{P,Q}$ with a linearization, whilst the second term represents the approximation error due to the substitution of the circle arc with isosceles trapezoid $\mathcal{A}$.
Now, by setting
\[
\Delta := \frac{1}{2} \max_{P \in \Pc, Q \in \Qc}\max_{|\alpha| = 2} \max_{x} |D^\alpha f_{P,Q}(x) |,
\]
we can estimate from above the first term of~\eqref{eq:error_r} using Taylor's Theorem for multivariate functions (see, for instance,~\cite{KOENIGSBERGER2002}) and the second one applying the consideration on isosceles trapezoid $\mathcal{A}$ we stated earlier, as follows
\begin{equation}\label{eq:error_r2}
p\Delta \sigma(B)^2 + p\Delta \rho \left(1 - \cos\left(\frac{\theta^+ - \theta^-}{2}\right)\right).
\end{equation}
Since $(\forall x \in \R)\ 1 - \cos(x) \leq \frac{1}{2}x^2$, we can rewrite~\eqref{eq:error_r2} as
\begin{equation}\label{eq:error_r3}
p\Delta \sigma(B)^2 + p\Delta \frac{\rho}{8} \left(\theta^+ - \theta^-\right)^2.
\end{equation}
Now, considering that $\theta^+ - \theta^- \leq \sigma(B)$, we obtain
\[
\left| f^*(B) - \phi_R(B) \right|
\leq \left(1 + \frac{\rho}{8}\right)p\Delta\sigma(B)^2.
\]
\section{Computational results}
\label{sec:results}
This section presents two numerical experiments, the first one is
based on random data, the second one on real data acquisitions.
\subsection{Randomly generated problems}
We generated various random instances of Problem~\ref{prob:glores}. In
each case, set $\Pc=\{P_1,\ldots,P_n\}$ contains $n$ random
points with coordinates uniformly distributed in interval $[-10,10]$.
Point set $\Qc=\{Q_1,\ldots,Q_n\}$ is defined by 
\[
Q_i=T_{P_i} (z,\theta) + (1-o_i) \eta_i + o_i \gamma_i,\quad i \in \{1,\ldots,n\}\,.
\]  
Here, $z \in \R^2$ is a random vector whose components are uniformly distributed in
$[-10,10]$ and $\theta$ is a random angle obtained from a uniform
distribution in $[0,2\pi]$.  Vector
$[o_1,\ldots,o_n]$ is such that it has $\lceil 0.1 n \rceil$
randomly selected components equal to $1$, the others being set to $0$.
For $i \in \{1,\ldots,n\}$, $\eta_i \in \R$ is a random
number
obtained from a gaussian distribution centered at $0$ with standard
deviation $\sigma$, while $\gamma_i$ is uniformly
distributed in interval $[-10,10]$.
Note that, if $o_i=1$, then, with high probability, a very large noise
component $\gamma_i$ is added to $Q_i$, so that $Q_i$ becomes an outlier. 
We set $\epsilon=0.0001$, $p=\lceil 0.8 n \rceil$, $\delta=0.1$ and considered different numbers of points $n$, taken from a set of logarithmically spaced values between
$10$ and $300$, while $\sigma$ varies in set
$\{10^{-4},10^{-3},10^{-2},10^{-1}\}$.
We ran these tests on a 2,6 GHz Intel Core i5 CPU, with 8 GB of
RAM. We implemented Algorithm~\ref{alg_bnb_imp} in C++\footnote{The implementation is available at \url{https://github.com/dlr1516/glores}.} and used Matlab
for plotting the results.  
Figure~\ref{fig:test1_time}
reports the computation time for different values of $\sigma$ as a
function of the number of points $n$. Figure~\ref{fig:test1_nodes}
shows the total number of nodes in the BnB algorithm. Note
that $\sigma$ has a large effect on both the computation time and the
number of nodes. This is related to the fact that larger values of
$\sigma$ cause an higher number of local minima in the objective
function of Problem~\ref{prob:glores}, that determines a larger number of explored nodes.
\begin{figure}[h]
    \centering
    \includegraphics[width=0.95\columnwidth]{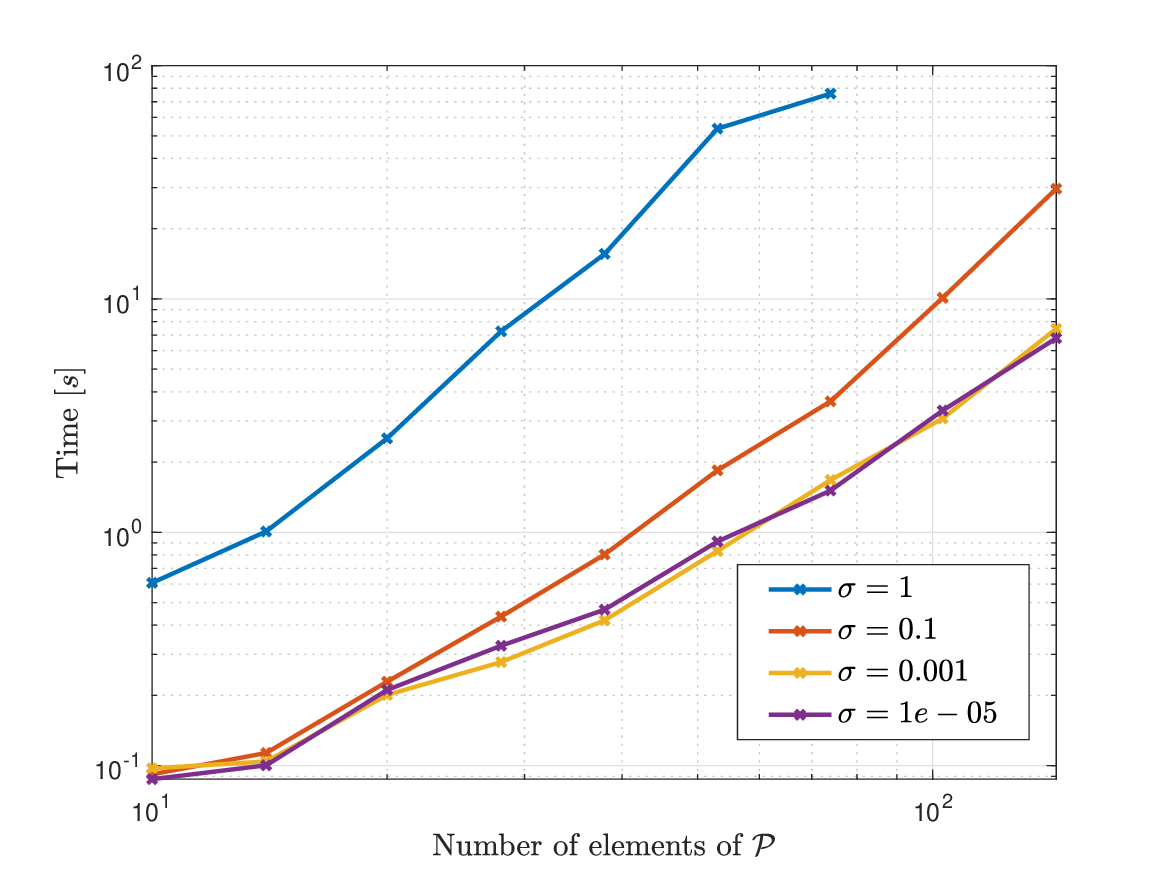}
    \caption{Computation times for randomly generated problems.}
    \label{fig:test1_time}
  \end{figure}  
\begin{figure}[h]
    \centering
    \includegraphics[width=0.95\columnwidth]{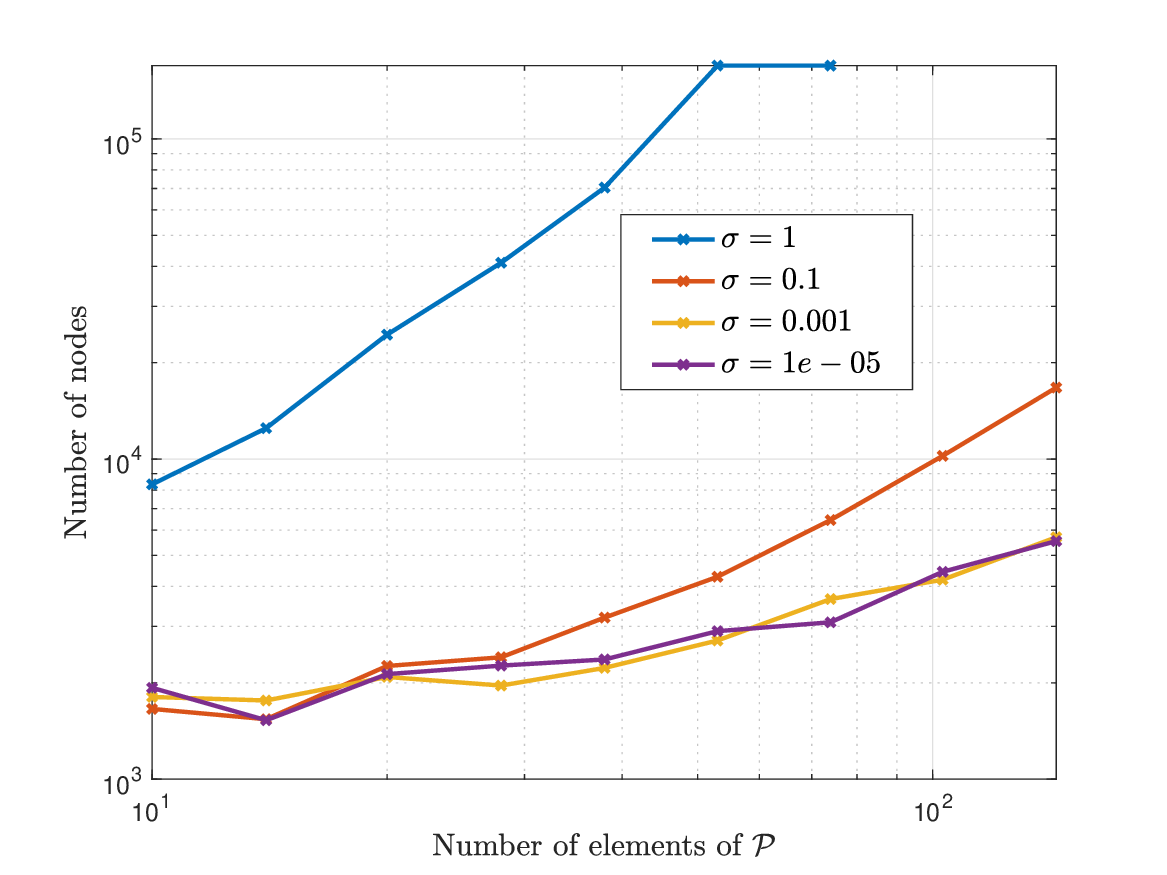}
    \caption{Number of explored nodes for randomly generated problems.}
    \label{fig:test1_nodes}
\end{figure} 
In a second test, we solved random instances of
Problem~\ref{prob:glores}, obtained in the same way as in the first
test, for different values of
the threshold parameter $\delta$ and of standard
deviation $\sigma$.  Figure~\ref{fig:test1_timetr}
reports the computation time for different values of $\sigma$ as a
function of $\delta$, while Figure~\ref{fig:test1_nodestr}
shows the total number of nodes in the BnB algorithm.
These figures suggest that the choice of
$\delta$ is critical to the efficiency of the algorithm. Namely, if
$\delta$ is too small, the relaxation bound is applied only to very
small boxes. In this way, the lower bound of the vast majority of
boxes is computed only with the cheap bound, that is less
efficient (see Proposition~\ref{prop:cheap_error}). In fact, both computational times and total number of nodes increase
very much for lower values of $\delta$.
On the other
hand, if $\delta$ is too large, the relaxation bound is used also for
large boxes, for which it gives poor results (since the error grows
with the square of the box size, as stated in
Theorem~\ref{prop:relaxation_error}). Hence, the computation of
the relaxation bound for larger boxes is ineffectual and has the only
effect of increasing the
computation time. Note that the number of nodes is not increased,
since, in any case, the cheap bound is computed for all boxes and the
overall lower bound for each box is the minimum between the cheap
bound and the relaxation bound. Note that these experiments clearly show that the relaxation bound is essential for the efficiency of the algorithm.
\begin{figure}[h]
    \centering
    \includegraphics[width=0.95\columnwidth]{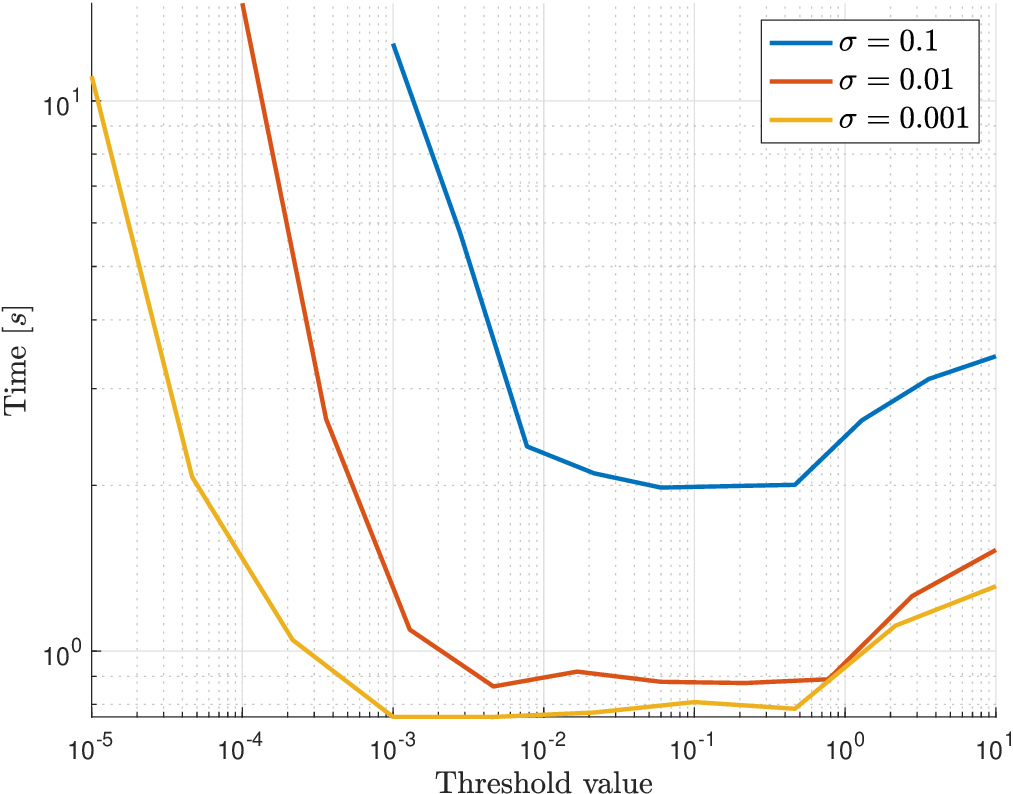}
    \caption{Computation times for randomly generated problems as a
      function of threshold $\delta$.}
    \label{fig:test1_timetr}
  \end{figure}  
\begin{figure}[h]
    \centering
    \includegraphics[width=0.95\columnwidth]{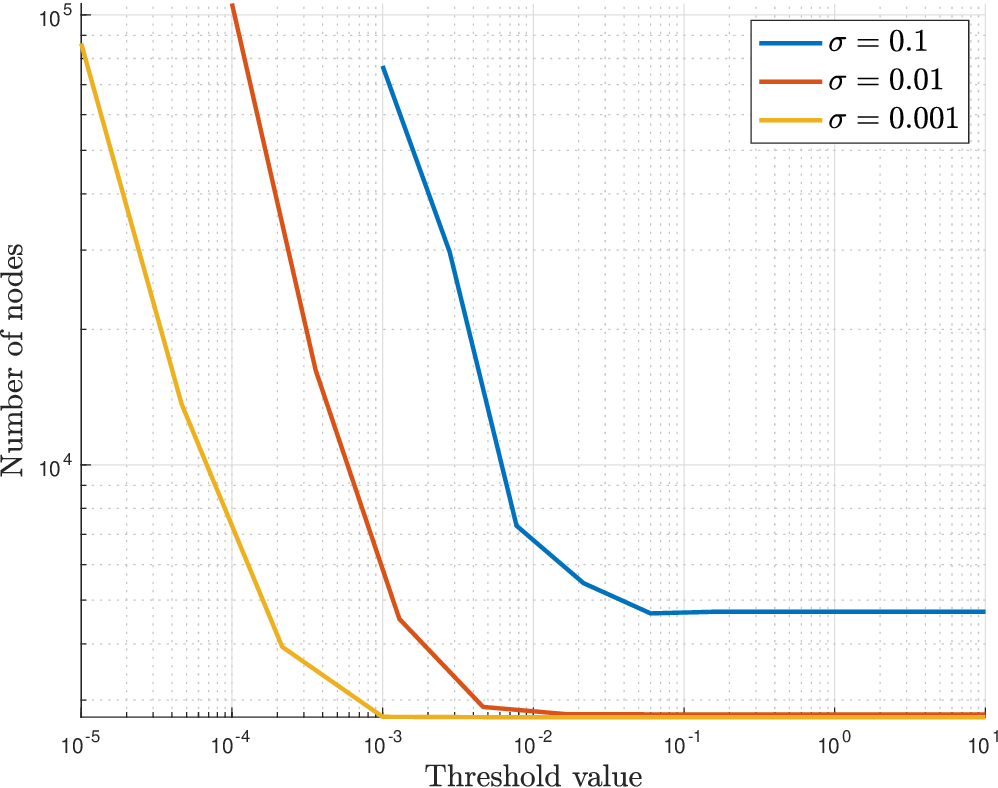}
    \caption{Number of explored nodes for randomly generated problems as a
      function of threshold $\delta$.}
    \label{fig:test1_nodestr}
\end{figure}
We also performed experiments to assess the impact of the proposed
queue management algorithm, presented in Section~\ref{sec:algorithm}
and Algorithm~\ref{alg_bnb_imp}, on computational efficiency.
To this end,  we solved random instances of
Problem~\ref{prob:glores}, obtained in the same way as in the previous
tests, for different values of standard
deviation $\sigma$, without the use of the queue, computing all items
of $L_{B,C}$ from scratch at each iteration, as done in the
initialization algorithm.
Figures~\ref{fig:test1_time_noc} and~\ref{fig:test1_nodes_noc} illustrate the results of these experiments. 
Note that the execution time of the algorithm with the queue algorithm
is significantly lower than the version without the efficient queue
management. Note also that we did not perform tests for a number of
point $n$ larger than $50$ because the computation time would have
been too large.
\begin{figure}[h]
    \centering
    \includegraphics[width=0.95\columnwidth]{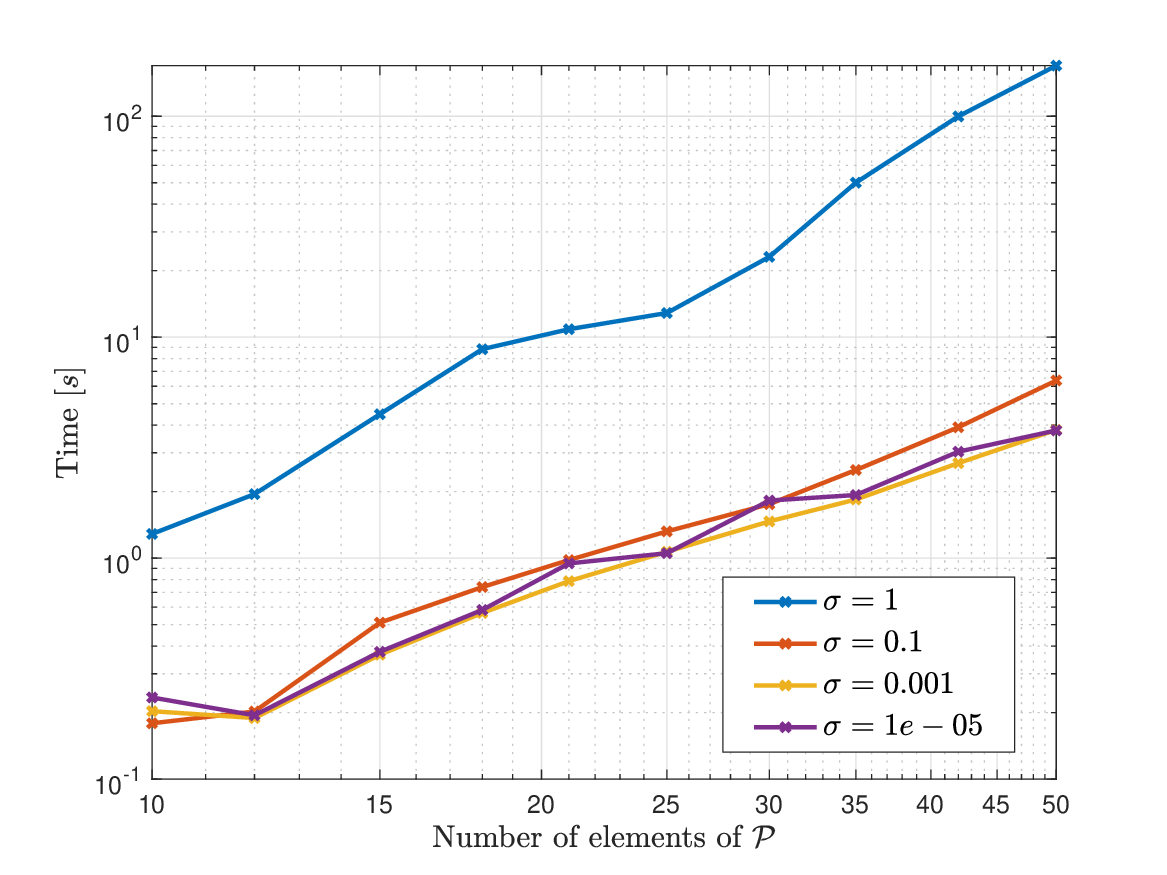}
    \caption{Computation times for randomly generated problems without
    queue algorithm.}
    \label{fig:test1_time_noc}
  \end{figure}  
\begin{figure}[h]
    \centering
    \includegraphics[width=0.95\columnwidth]{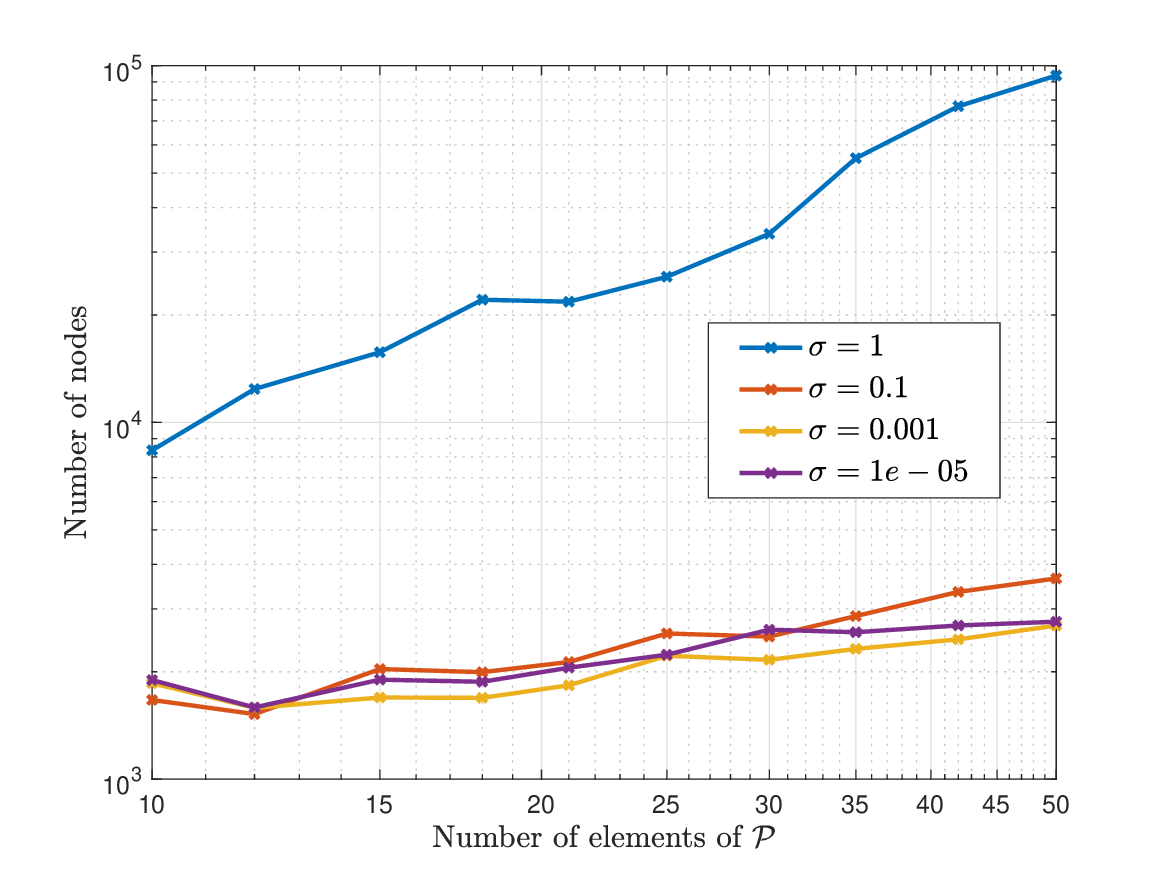}
    \caption{Number of explored nodes for randomly generated problems without
    queue algorithm.}
    \label{fig:test1_nodes_noc}
\end{figure}
\subsection{Laser scanner data}
We obtained some real world data by a lidar sensor Sick NAV350 mounted on
an industrial autonomous vehicle. We acquired different point sets, denoted by
$\Lc_1,\ldots,\Lc_S$, by placing the vehicle at various locations in a
warehouse. Each set contains $n=480$ points. It is related to a different vehicle pose
(defined by its position and orientation) and each point represents
the position of the first obstacle encountered by the lidar laser beam
along a given direction. Two such set of points are reported in Figure~\ref{fig:scan_aligned_e80_relaxation_on}.
We considered a set of $S=15$ poses and, for each couple of poses
$(i,j)$, we solved Problem~\ref{prob:glores} with $\Pc=\Lc_i$ and
$\Qc=\Lc_j$, computing the solution time $t_{(i,j)}$ and the optimal
value $f_{(i,j)}$.
\setlength\tabcolsep{1.5pt}
\begin{table}
\centering
\begin{tabular}{|c|c|c|c|c|c|c|c|c|c|c|c|c|c|}
\hline
 & 1 & 2 & 3 & 4 & 5 & 6 & 7 & 8 & 9 & 10 & 11 & 12 & 13 \\
\hline
2 & 40.8 &  &  &  &  &  &  &  &  &  &  &  &  \\
\hline
3 & 36.6 & 43.0 &  &  &  &  &  &  &  &  &  &  &  \\
\hline
4 & 42.7 & 41.9 & 38.5 &  &  &  &  &  &  &  &  &  &  \\
\hline
5 & 41.9 & 40.1 & 46.9 & 49.0 &  &  &  &  &  &  &  &  &  \\
\hline
6 & 43.7 & 42.1 & 50.8 & 41.6 & 39.2 &  &  &  &  &  &  &  &  \\
\hline
7 & 36.9 & 37.8 & 37.4 & 39.3 & 37.4 & 31.3 &  &  &  &  &  &  &  \\
\hline
8 & 63.3 & 47.7 & 51.9 & 58.5 & 62.8 & 35.7 & 44.3 &  &  &  &  &  &  \\
\hline
9 & 39.0 & 39.2 & 33.5 & 40.7 & 38.7 & 28.7 & 30.4 & 28.6 &  &  &  &  &  \\
\hline
10 & 31.3 & 34.1 & 31.3 & 35.1 & 31.4 & 25.6 & 25.7 & 24.4 & 24.1 &  &  &  &  \\
\hline
11 & 30.3 & 43.4 & 28.5 & 33.8 & 27.5 & 23.2 & 23.0 & 24.0 & 23.1 & 23.5 &  &  &  \\
\hline
12 & 27.6 & 27.2 & 28.1 & 29.1 & 26.9 & 24.0 & 26.3 & 23.4 & 24.7 & 24.2 & 23.9 &  &  \\
\hline
13 & 41.6 & 33.0 & 36.8 & 38.1 & 36.1 & 31.5 & 30.6 & 31.0 & 32.0 & 29.6 & 30.0 & 28.8 &  \\
\hline
14 & 29.2 & 32.1 & 30.6 & 28.4 & 27.8 & 24.9 & 28.5 & 27.9 & 27.5 & 28.9 & 27.9 & 27.0 & 29.3 \\
\hline
\end{tabular}
\caption{Computation times [s] for each pair of poses}
\label{table:table_times}
\end{table}

\begin{table}
\centering
\begin{tabular}{|c|c|c|c|c|c|c|c|c|c|c|c|c|c|}
\hline
 & 1 & 2 & 3 & 4 & 5 & 6 & 7 & 8 & 9 & 10 & 11 & 12 & 13 \\
\hline
2 & 0.3 &  &  &  &  &  &  &  &  &  &  &  &  \\
\hline
3 & 0.4 & 0.3 &  &  &  &  &  &  &  &  &  &  &  \\
\hline
4 & 0.4 & 0.3 & 0.4 &  &  &  &  &  &  &  &  &  &  \\
\hline
5 & 0.5 & 0.4 & 0.5 & 0.4 &  &  &  &  &  &  &  &  &  \\
\hline
6 & 0.9 & 0.8 & 0.9 & 0.9 & 0.8 &  &  &  &  &  &  &  &  \\
\hline
7 & 1.5 & 1.4 & 1.3 & 1.4 & 2.3 & 0.6 &  &  &  &  &  &  &  \\
\hline
8 & 7.3 & 4.5 & 3.5 & 4.1 & 8.6 & 1.2 & 0.4 &  &  &  &  &  &  \\
\hline
9 & 3.1 & 2.1 & 1.7 & 2.1 & 4.9 & 0.8 & 0.2 & 0.2 &  &  &  &  &  \\
\hline
10 & 1.5 & 1.4 & 1.4 & 1.5 & 2.8 & 0.7 & 0.2 & 0.3 & 0.1 &  &  &  &  \\
\hline
11 & 1.3 & 1.2 & 1.3 & 1.5 & 2.8 & 0.7 & 0.2 & 0.3 & 0.1 & 0.1 &  &  &  \\
\hline
12 & 1.1 & 1.1 & 1.2 & 1.3 & 2.1 & 0.6 & 0.2 & 0.3 & 0.2 & 0.1 & 0.1 &  &  \\
\hline
13 & 1.3 & 1.3 & 1.4 & 1.5 & 2.6 & 0.8 & 0.2 & 0.4 & 0.2 & 0.2 & 0.1 & 0.1 &  \\
\hline
14 & 1.3 & 1.6 & 1.8 & 1.5 & 2.3 & 0.5 & 0.2 & 0.4 & 0.2 & 0.2 & 0.2 & 0.2 & 0.2 \\
\hline
\end{tabular}
\caption{Minimum for each pair of poses}
\label{table:table_fun}
\end{table}

\begin{figure}[h]
    \centering
    \includegraphics[width=0.95\columnwidth]{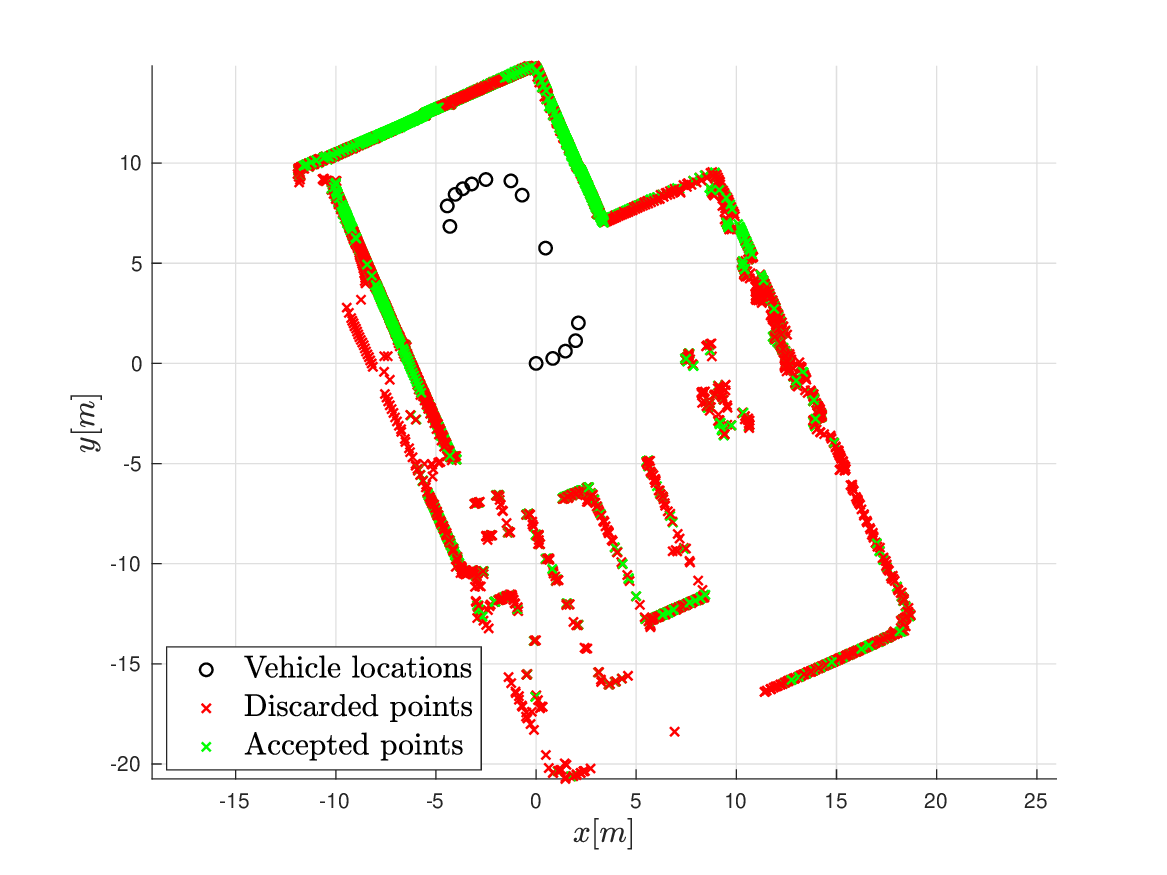}
    \caption{Reconstructed map from laser scanner data.}
    \label{fig:test2_map}
\end{figure}
In our tests, we set $\epsilon=0.0001$, $p=\lceil 0.8 n \rceil$ and
$\delta=0.1$.
We used the same hardware as in the randomly generated tests.
The results are reported in Tables~\ref{table:table_times} and~\ref{table:table_fun}. 
For the sake of representation, we computed the reconstructed vehicle
poses and sensed point with respect to the reference frame of the
first pose.  To this end, we defined an undirected complete graph with nodes $\{1,\ldots,S\}$ and assigned weight $f_{(i,j)}$ to arc $(i,j)$.
To define a common reference frame for all poses, we arbitrarily set the pose of $\Lc_1$ at
$(0,0)$ with $0$ orientation. Then, we computed the location and orientation
of the origin of every set $\Lc_i$, $i \in \{2,\ldots,S\}$, with the
following procedure. Let $P$ be the minimum distance path that
joins node $i$ to node $1$. We compute the coordinate transformation from
the origin of $\Lc_i$ to the origin of $\Lc_1$ by
composing the transformations associated to the arcs of
$P$. The overall error is minimized since $P$ is the minimum distance
path.
The results are reported in Figure~\ref{fig:test2_map}, where the
green crosses represent the transformed points that have been included
in the sum in~\eqref{eq:obj_fun}, the red crosses represent the points
that have not been included in the sum in~\eqref{eq:obj_fun} (the
outliers) and the circles represents the estimated positions of the
autonomous vehicle.
\begin{figure}[th]
    \centering
    \includegraphics[width=0.95\columnwidth]{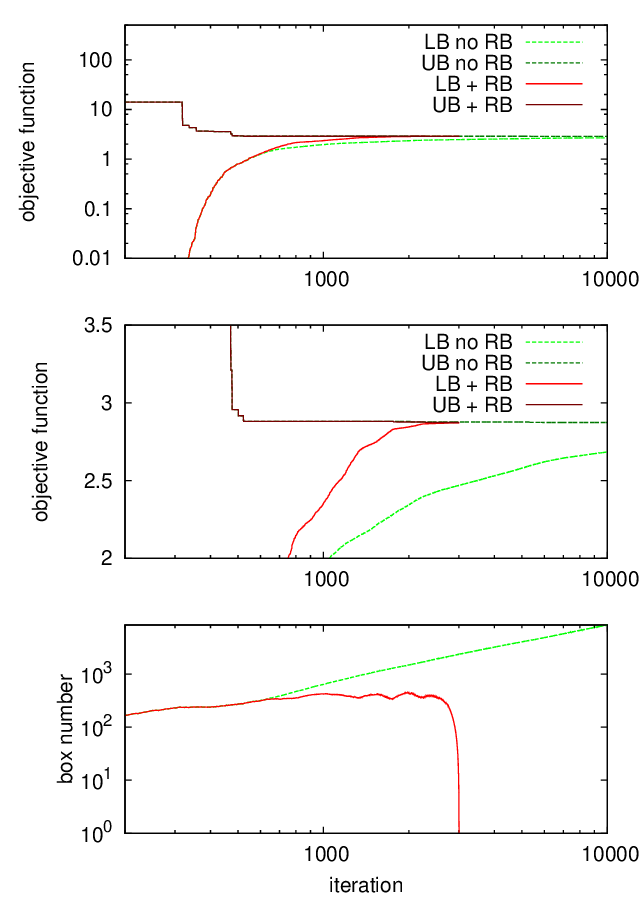}
    \caption{Comparison between BnB registration with or without the relaxation bound (RB).
      First row: the lower (LB) and upper (UB) bounds of objective error function estimated with (red) or without (green) the RB.
      Second row: an enlargement of previous plot focused on objective function global minimum. 
      Third row: the number of nodes used by the algorithms with or without the relaxation bound. 
    }
    \label{fig:stats_bounds_scan_e80}
\end{figure}
We also applied Algorithm~\ref{alg_bnb_imp} to the point sets reported
in Figure~\ref{fig:scan_aligned_e80_relaxation_on} with and without
the use of the relaxation bound. Namely, in a first test we set
$\delta=0.8$ and $\epsilon=0.001$, while, in a second test, we changed
the value of $\delta$ to $0$, disabling the use of the relaxation bound.
In Figure~\ref{fig:stats_bounds_scan_e80}, the top and the middle
graph (which is a magnification of the top one) report the results of these
two tests. In particular, the line denoted by ``RB'' represent the
first test, with $\delta=0.8$ and the line denoted by ``no-RB''
represents the second one, with the relaxation bound disabled.
Note that, with $\delta=0.8$, the lower and upper bounds converge to
required relative tolerance $\epsilon = 0.001$ after iteration $2123$
whereas, with $\delta=0$, $10000$ iterations are not sufficient. 
The bottom graph of Figure~\ref{fig:stats_bounds_scan_e80} shows the
number of boxes in the queue for the two cases. 
Note that the number of boxes obtained in the case $\delta=0.8$ is
much smaller than in the case $\delta=0$.
\section{Conclusions}
\label{sec:conclusion}

In this paper, we presented a BnB algorithm for the registration of planar point sets. 
The algorithm estimates the global minimizer of the objective function given by the sum of distances between each point of the source set to its closest point on the destination set. 
The main contribution w.r.t. the state-of-the-art global registration lies in the adoption 
of a novel lower bound, the \emph{relaxation bound}, which is based on linearized distance and speeds up the convergence of the registration through local linear approximation function exploiting mutual evaluation on the whole point set. 
Such lower bound is very effective on small boxes, while on larger boxes we adopted  the \emph{cheap bound}, which decomposes the estimation of objective function into the sum of the minimum distances between corresponding points, which considerably improves the performance. 
This is similar to what already done in~\cite{YangLiCampbellJia16} but 
the efficient update and dropping of the candidate closest points is driven by formally guaranteed policies. 
The correctness and speed of convergence of the proposed bounds as well as the global optimality of the solution are formally proven. 
Simulation and real datasets acquired with range finders have shown the accuracy and efficiency of the proposed method. 
In future works, we expect to address the registration problem in 3D space using the same approach to lower bound. 
The decomposition of error function used in cheap bound and the relaxation bound are general and can be applied to more geometrically complex problems. 

\bibliographystyle{abbrv}
\bibliography{biblio}

\end{document}